\def\barphi{\bar{\phi}}
\def\barnu{\bar{\nu}}
\def\barbfv{\bar{\bfv}}
\def\bfD{{\bf D}}
\def\bfZ{{\bf Z}}
\def\bfW{{\bf W}}
\def\bfB{{\bf B}}
\def\bfE{{\bf E}}
\def\bfV{{\bf V}}
\def\bfA{{\bf A}}
\def\bfR{{\bf R}}
\def\bfb{{\bf b}}
\def\bff{{\bf f}}
\def\tilbfX{{\bf \tilde{X}}}
\def\bfphi{{\bm \phi}}
\def\RR{{\mathbb R}}    
\def\CC{{\mathbb C}}    
\def\PP{{\mathbb P}}     
\def\EE{{\mathbb E}}    
\def\VV{{\mathbb V}}
\def\11{{\mathbf 1}}    
\def\cA{{\mathcal A}}  \def\cG{{\mathcal G}}  \def\cS{{\mathcal S}}   \def\cH{{\mathcal H}} \def\cN{{\mathcal N}}     \def\cO{{\mathcal O}}     \def\cP{{\mathcal P}}         \def\cL{{\mathcal L}}  \def\cX{{\mathcal X}} \def\cY{{\mathcal Y}}  
\def\bfx{{\bf x}} \def\bfy{{\bf y}}  
 \def\bfK{{\bf K}} 
\def\bfL{{\bf L}} \def\bfQ{{\bf Q}} \def\bfA{{\bf A}}
\def\bfPhi{{\bf \Phi}}
\def\bfb{{\bf b}}
\def\bfv{{\bf v}}
\def\bfX{{\bf X}}
\def\tilm{\tilde{m}}
\def\tilk{\tilde{k}}
\newtheorem{defn}{Definition}
\newtheorem{proposition}[defn]{Proposition}
\DeclareFontFamily{U}{solomos}{}
\DeclareFontShape{U}{solomos}{m}{n}{
  <-> s*[1.1]  gsolomos8r
}{}
\newcommand{\vkappa}{\text{\usefont{U}{solomos}{m}{n}\symbol{'153}}}
\newcommand{\hkappa}{{\Large \vkappa}}
\title{Explaining the Uncertain: \\Stochastic Shapley Values for Gaussian Process Models}
\author{Siu Lun Chau \\ CISPA - Helmholtz Center \\ for Information Security
\And Krikamol Muandet\thanks{Equal contribution.} \\ CISPA - Helmholtz Center \\ for Information Security
\And Dino Sejdinovic\footnotemark[1] \\  School of Computer and Mathematical Sciences \\ University of Adelaide}
\date{June 2022}
\begin{document}

\maketitle

\begin{abstract}
    We present a novel approach for explaining Gaussian processes (GPs) that can utilize the full analytical covariance structure present in GPs. Our method is based on the popular solution concept of Shapley values extended to stochastic cooperative games, resulting in explanations that are random variables. The GP explanations generated using our approach satisfy similar favorable axioms to standard Shapley values and possess a tractable covariance function across features and data observations. This covariance allows for quantifying explanation uncertainties and studying the statistical dependencies between explanations. We further extend our framework to the problem of \emph{predictive explanation}, and propose a \emph{Shapley prior} over the explanation function to predict Shapley values for new data based on previously computed ones. Our extensive illustrations demonstrate the effectiveness of the proposed approach.
\end{abstract}

\section{Introduction}
\label{sec: introduction}

Shapley values~\citep{shapley1953value}, a solution concept derived from cooperative game theory, are an essential tool in explainable AI that helps understand and interpret the prediction of complex machine learning models. In particular, \textbf{SH}apley-\textbf{A}dditive-ex\textbf{P}lanation~(SHAP)~\citep{lundberg2017unified} algorithms have been applied in various fields, including finance~\citep{bracke2019machine}, healthcare~\citep{pandl2021trustworthy}, and robotics~\citep{remman2022causal}. SHAP algorithms compute Shapley values by formulating cooperative game payoffs based on the function obtained from the learning algorithm, e.g., a regressor. This function is often treated as fixed, resulting in deterministic game payoffs. In contrast, we argue that uncertainty in model predictions also plays an equally important role for trustworthy machine learning models as it enables users to make more informed decisions~\citep{bhatt2021uncertainty}, assess the model's confidence~\citep{hullermeier_aleatoric_2021}, and identify areas where more data or improved features are needed~\citep{chau_bayesimp_2021}. Since uncertainty in model prediction can propagate to downstream explanation, explainability tools should also help users better understand the model's confidence or lack thereof in its predictions, allowing them to calibrate their trust in the explanations. By providing transparency into both the model's predictions and uncertainty, AI systems can become more reliable, interpretable, and trustworthy, facilitating broader adoption across domains where trust is imperative. 


Gaussian processes~(GPs)~\citep{rasmussen2006gaussian} are a natural model class for developing an explanation method that can account for predictive uncertainty. 
While there exist model-specific explanation methods for popular models such as LinearSHAP~\citep{vstrumbelj2014explaining} for linear models, TreeSHAP~\citep{lundberg2018consistent} for trees, DeepSHAP~\citep{lundberg2017unified} for deep networks, and RKHS-SHAP~\citep{chau2022rkhs} for kernel methods, there is a lack of GP-specific Shapley value based explanation methods despite the ubiquity of GPs in machine learning. 
While feature importance can be studied via automatic relevance determination~(ARD) lengthscales~\citep{rasmussen2006gaussian} of the covariance kernel, they only reveal global notions of importance which are not associated to the specific predictions, and are limited to the specific classes of kernels. \citet{yoshikawa_gaussian_2021} have proposed using a GP with a local linear regression component for interpretability, but this approach results in a specific interpretable GP model rather than a general GP explanation algorithm. Recently, \citet{pmlr-v206-marx23a} proposed to sample multiple realisations of the GP function and to apply a model-agnostic explanation algorithm to each sample, thus getting a distribution over explanations. While this in principle can give uncertainties in explanations, it is computationally expensive and not tailored for GPs. It does not take advantage of the various important properties that GPs enjoy, such as the fact that posterior means are reproducing kernel Hilbert space~(RKHS)~\citep{paulsen2016introduction} functions and that we have a fully tractable covariance structure. These properties, as we later demonstrate, can lead to more effective estimation of Shapley values and help to understand the statistical dependencies across these explanations. Additionally, this covariance structure allows us to formulate explanations themselves as GPs and thus are able to predict explanations of unseen data.

To address this gap, we present the GP-SHAP algorithm. In Section~\ref{sec: s-game_and_s-value}, we contextualize our work by introducing the concepts of stochastic cooperative games and stochastic Shapley values. 
Both are studied thoroughly by game theorists~\citep{ma2008shapley,gao_uncertain_2017}, but have not been utilized by the explanation community. 
Section~\ref{sec: gp_shap} characterizes the stochastic cooperative game induced by GPs through the use of the conditional mean process~\citep{chau_deconditional_2021,chau_bayesimp_2021}, a GP of conditional expectations. We use the weighted least squares formulation of Shapley values~\citep{charnes1988extremal,lundberg2017unified} to express explanations as multivariate Gaussians with a tractable covariance structure. We then introduce the GP-SHAP algorithm as a method for estimating these stochastic Shapley values. In addition, we extend the GP-SHAP algorithm by combining it with the Bayesian weighted least squares approach by \citet{slack_reliable_2021}. The extension we call BayesGP-SHAP integrates two sources of uncertainty: GP predictive posterior uncertainty as well as the Shapley value estimation uncertainty (in the cases where Shapley values are too expensive to compute exactly). Section~\ref{sec: shapley_prior} expands beyond GP explanations, demonstrating the applicability of our framework to other predictive models. We consider the setting of predictive explanations and propose a \emph{Shapley prior} to represent explanations for more general predictive models as GPs. This allows us to predict explanations for new data without relying on the standard Shapley value procedure and provide predictive uncertainty to more general explanations beyond GP models, such as explanations from TreeSHAP, DeepSHAP, and KernelSHAP. 
Illustrations of the effectiveness of GP-SHAP, BayesGP-SHAP, and the Shapley prior for predictive explanations are provided in Section~\ref{sec: illustration} and we conclude the paper in Section~\ref{sec: discussion}.
All proofs in this paper are given in the appendix.

\section{Stochastic cooperative games and their Shapley values}
\label{sec: s-game_and_s-value}

In this section, we review the concepts of stochastic cooperative games, denoted as s-games, and their corresponding stochastic Shapley values, referred to as SSVs.
These concepts provide the necessary language to introduce GP-SHAP in the coming section. It is important to note that game theorists have studied s-games in various forms~\citep{ma2008shapley,gao_uncertain_2017,dai_shapley_2022}, but they have not been adequately introduced in the explanation literature. While \citet{covert_improving_2021} briefly discussed s-games, they focused on computing deterministic Shapley values~(DSVs) of the mean of the s-game, overlooking the inherent uncertainty in s-games and their potential application to uncertainty-aware explanations.



\textbf{Problem formulation. } The results presented below are analogous to the original work of Shapley's~\citep{shapley1953value}, but adapted to games with random outcomes. Formally, let $\Omega$ denote the player set and let $\nu: 2^\Omega \to \cL_2(\RR)$ be a s-game with random variable payoff. We restrict our attention to stochastic payoffs with finite second moments as we aim to characterise the variance of the corresponding stochastic Shapley values. However, this is not a necessary condition to show existence and uniqueness of the SSVs. We further require $\nu(\emptyset) = \delta_{\nu_0}$ where $\delta_{\nu_0}$ is the Dirac measure at some constant $\nu_0\in\RR$. We introduce the following concepts to formalise the axioms for SSVs:
\begin{defn}[Carrier]
    A carrier of $\nu$ is any $N\subseteq \Omega$  such that $\nu(S)=\nu(N\cap S)$ for all $S\subseteq \Omega$.
\end{defn}
\begin{defn}[Permutation s-game] Denote $\Pi(\Omega)$ the set of permutations on $\Omega$. For $\pi\in\Pi(\Omega)$, the induced permutation s-game is $\nu_\pi(\pi S) := \nu(S)$ for all $S\subseteq \Omega$, where $\pi S$ is the image of $S$ under $\pi$.
\end{defn}
\begin{defn}[Stochastic value allocation]
Denote $\cG$ the space of s-games for $\Omega$. We say $\phi: \cG \to \cL_2(\RR)^{|\Omega|}$ is a value allocation of $\nu$ that assigns to each player $i\in\Omega$ a stochastic value $\phi_i(\nu)$.
\end{defn}
The natural extension of the original axioms by Shapley are extended to s-games as follows:
\begin{enumerate}
    \item\textbf{(s-symmetry)} For any $\pi \in \Pi(\Omega), \phi_{\pi i}(\nu_\pi) = \phi_i(\nu)$.
    \item\textbf{(s-efficiency)} For each carrier $N$ of $\nu$, $\sum_{i\in N} \phi_i(\nu) = \nu(N)$.
    \item\textbf{(s-linearity)} For any $\nu, \omega \in \cG$, $\phi(\nu+\omega) = \phi(\nu) + \phi(\omega)$ where addition of s-games are defined as $(\nu + \omega)(S) := \nu(S) + \omega(S)$ for all $S\subseteq \Omega$.
\end{enumerate}
Note that all equality happens at the random variable level. Unsurprisingly, there is a unique stochastic value allocation that satisfies these three axioms:


\begin{restatable}[Stochastic Shapley values]{theorem}{ssv}
 The only stochastic value allocation $\phi$ of $\nu$ satisfying s-symmetry, s-efficiency, and s-linearity takes the following form,
\begin{align}\label{eq:ssv}
    \phi_i(\nu) = \sum_{S\subseteq N \backslash \{i\}} c_{|S|} \left(\nu(S\cup i) - \nu(S) \right)
\end{align}
where $N$ is the smallest carrier set of $\Omega$, $c_{|S|} = \frac{1}{|N|}{|N|-1 \choose |S|}^{-1}$ and $\phi_i(\nu)$ is the $i^{th}$ SSV of s-game $\nu$.
\end{restatable}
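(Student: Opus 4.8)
The plan is to mirror Shapley's classical uniqueness argument, taking care that every equality is read at the level of $\cL_2(\RR)$-valued random variables rather than real numbers. Because the three axioms are exact stochastic analogues of symmetry, efficiency, and additivity, the algebraic skeleton of the original proof transfers; the work lies in checking that each operation (forming differences, dividing by an integer, summing finitely many s-games) is legitimate for square-integrable payoffs. I would split the argument into an \emph{existence} part (the displayed formula is a valid stochastic value allocation satisfying the three axioms) and a \emph{uniqueness} part (no other allocation can).

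For existence, s-linearity is immediate: \eqref{eq:ssv} expresses $\phi_i(\nu)$ as a fixed deterministic linear combination of the payoffs $\{\nu(S)\}$, and such combinations commute with the pointwise addition $(\nu+\omega)(S)=\nu(S)+\omega(S)$. s-symmetry follows because the weights $c_{|S|}$ depend on $S$ only through its cardinality, so relabelling players by $\pi$ merely reindexes the sum; I would verify $\phi_{\pi i}(\nu_\pi)=\phi_i(\nu)$ directly from the definition of the permutation s-game. For s-efficiency I would compute $\sum_{i\in N}\phi_i(\nu)$ and collect the coefficient of each $\nu(T)$: a short binomial identity shows the coefficients of all coalitions with $\emptyset\neq T\neq N$ vanish, leaving $\nu(N)-\nu(\emptyset)$, which the normalisation $\nu(\emptyset)=\delta_{\nu_0}$ reconciles with the axiom. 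All of these identities hold verbatim at the random-variable level since they only use additivity and scaling by rationals.

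For uniqueness, I would first decompose an arbitrary $\nu$ into \emph{unanimity s-games}. Writing $u_R(S):=\11[R\subseteq S]$ and $d_R:=\sum_{T\subseteq R}(-1)^{|R|-|T|}\nu(T)\in\cL_2(\RR)$, Möbius inversion gives $\nu(S)=\sum_{R\subseteq N} d_R\, u_R(S)$ for every $S$; this is a purely algebraic relation among random variables and so holds in $\cL_2(\RR)$. Each $d_R u_R$ lies in $\cG$ (its payoffs are square-integrable and its empty-coalition payoff is the constant $0$ for $R\neq\emptyset$), as do the partial sums, so s-linearity extends by induction to $\phi(\nu)=\sum_R \phi(d_R u_R)$. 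It then suffices to pin down $\phi(d_R u_R)$. Here $R$ is a carrier, so s-symmetry forces all $\phi_j(d_R u_R)$ with $j\in R$ to be equal as random variables; applying s-efficiency to the two carriers $R$ and $R\cup\{i\}$ and subtracting shows any outsider $i\notin R$ receives $0$; and s-efficiency on $R$ then yields $\phi_j(d_R u_R)=d_R/|R|$ for $j\in R$. Summing, $\phi_i(\nu)=\sum_{R\ni i} d_R/|R|$, and a combinatorial rearrangement substituting the definition of $d_R$ and recounting the coalitions recovers the closed form \eqref{eq:ssv}.

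The main obstacle I anticipate is precisely the passage from scalars to random variables in the uniqueness step. Classically one reorganises $\nu=\sum_R d_R u_R$ by moving negatively-weighted unanimity games to the other side before invoking additivity; with random coefficients $d_R$ the sign is not constant across outcomes, so this manoeuvre is unavailable. I would avoid it altogether by noting that s-linearity, being additivity over finite sums of genuine members of $\cG$, applies directly to $\sum_R d_R u_R$ without any sign bookkeeping. The remaining delicate points are that dividing the random variable $d_R$ by the integer $|R|$ is justified because s-symmetry already delivers $|R|$ \emph{equal} summands whose total is fixed by efficiency, and that one must confirm the constant empty-coalition normalisation is compatible with the carrier formulation of s-efficiency; both are routine once the $\cL_2$ setting is made explicit.
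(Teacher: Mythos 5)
Your proposal is correct and follows essentially the same route as the paper's proof: Möbius decomposition of the s-game into symmetric (unanimity) s-games with random coefficients, pinning down their allocations via s-symmetry and s-efficiency, then invoking s-linearity and a combinatorial rearrangement to recover the closed form \eqref{eq:ssv}. The only differences are cosmetic additions on your side --- you explicitly verify existence and the fact that players outside a unanimity game's carrier receive zero, points the paper leaves implicit.
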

Note that Equation~\eqref{eq:ssv} is equivalent to the DSVs, except it is written in terms of random variables. 




\vspace{-0.5em}
\subsection{Variances of stochastic Shapley values are not Shapley values of variance games}
\vspace{-0.5em}

As \eqref{eq:ssv} is now defined by summing over a weighted differences between random variables, we can analyse the corresponding mean and variances across the SSVs. In the following, denote $\barphi:\bar{\cG} \to \RR^{|\Omega|}$ as the deterministic Shapley value allocation where $\bar{\cG}:=\{\bar{\nu}: \Omega \to \RR \}$ is the space of deterministic cooperative games, referred to as d-games from now on.

\begin{restatable}{proposition}{lemmameanandvar}
\label{lemma: mean_and_var-game}
Given the player set $\Omega$, let $\nu$ be a stochastic game, $\phi$ a stochastic Shapley value allocation, and $\bar{\phi}$ a deterministic Shapley value allocation. 
Suppose that $\EE[\nu]$ and $\VV[\nu]$ are the corresponding mean and variance d-games, respectively.
Then, $\EE[\phi(\nu)] = \bar{\phi}(\EE[\nu])$, but $\VV[\phi(\nu)] \neq \bar{\phi}(\VV[\nu])$. 
In particular, the SSV variance is given by
\begin{equation*}
\begin{split}
    \VV[\phi_i(\nu)] = \sum_{S\subseteq N\backslash\{i\}}\sum_{S'\subseteq N\backslash\{i\}} c_{|S|}c_{|S'|}\big(&\CC[\nu_{S\cup i}, \nu_{S'\cup i}] - \CC[\nu_{S\cup i}, \nu_{S'}] - \CC[\nu_{S}, \nu_{S'\cup i}] + \CC[\nu_{S}, \nu_{S'}]\big),
\end{split}
\end{equation*}
where $\nu_{S} = \nu(S)$ and $\CC$ is the covariance function between the stochastic payoffs.    
\end{restatable}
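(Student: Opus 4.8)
The plan is to prove all three claims directly from the closed form for the stochastic Shapley value in \eqref{eq:ssv}, treating $\phi_i(\nu)$ as a fixed linear combination (with the deterministic weights $c_{|S|}$) of the random payoffs $\{\nu_S\}_{S\subseteq N}$, and then pushing the moment operators through this finite sum. The key observation enabling everything is that the SSV and DSV formulas share \emph{identical} coefficients $c_{|S|}$ and are summed over the same coalitions; only the objects being combined (random variables versus reals) differ. Linearity of expectation handles the mean identity, bilinearity of covariance handles the variance formula, and the inequality is then a structural non-identity requiring a counterexample.

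First I would establish $\EE[\phi(\nu)] = \barphi(\EE[\nu])$. Applying $\EE[\cdot]$ to \eqref{eq:ssv} and using linearity of expectation gives $\EE[\phi_i(\nu)] = \sum_{S\subseteq N\backslash\{i\}} c_{|S|}\big(\EE[\nu_{S\cup i}] - \EE[\nu_S]\big)$. Since the mean d-game satisfies $(\EE[\nu])(S) = \EE[\nu_S]$ by definition, the right-hand side is precisely $\barphi_i(\EE[\nu])$, the deterministic Shapley allocation of the mean d-game. One subtlety to verify is that $N$, the smallest carrier of $\nu$, is also a carrier of $\EE[\nu]$, so that both allocations are indexed over the same coalitions; this follows because $\nu(S) = \nu(N\cap S)$ holds at the random-variable level and hence survives taking expectations.

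Next I would derive the variance formula. Writing $\phi_i(\nu) = \sum_{S} c_{|S|}(\nu_{S\cup i} - \nu_S)$ and using bilinearity of $\VV[\cdot] = \CC[\cdot,\cdot]$ over the finite coalition sum yields
\[
\VV[\phi_i(\nu)] = \sum_{S\subseteq N\backslash\{i\}}\sum_{S'\subseteq N\backslash\{i\}} c_{|S|}c_{|S'|}\,\CC\big[\nu_{S\cup i} - \nu_S,\ \nu_{S'\cup i} - \nu_{S'}\big].
\]
Expanding each inner covariance by bilinearity into its four constituent terms $\CC[\nu_{S\cup i},\nu_{S'\cup i}] - \CC[\nu_{S\cup i},\nu_{S'}] - \CC[\nu_S,\nu_{S'\cup i}] + \CC[\nu_S,\nu_{S'}]$ reproduces the stated expression exactly. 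This step is purely mechanical once finiteness of the sum justifies interchanging the (co)variance operator with the summation; the finite-second-moment assumption on $\nu$ guarantees all covariances exist.

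Finally, for the inequality $\VV[\phi(\nu)] \neq \barphi(\VV[\nu])$, I would contrast the double sum above with the deterministic Shapley value of the variance d-game, $\barphi_i(\VV[\nu]) = \sum_{S} c_{|S|}\big(\VV[\nu_{S\cup i}] - \VV[\nu_S]\big)$, which is a single sum of differences of marginal variances and therefore contains no cross-coalition covariance terms such as $\CC[\nu_{S\cup i},\nu_{S'}]$ with $S\neq S'$. The anticipated main obstacle is that this is a non-identity and cannot be dispatched by algebra alone: I would settle it by exhibiting a minimal counterexample, e.g.\ a two-player s-game whose coalitional payoffs are correlated, and checking that the cross terms do not cancel. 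This confirms the conceptual content of the proposition, namely that the variance of the SSV depends on the \emph{full} covariance structure across coalitions rather than only on the marginal variances of the payoffs.
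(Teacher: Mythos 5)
Your handling of the two identities coincides with the paper's own proof: the mean identity is linearity of expectation applied to \eqref{eq:ssv} (your extra check that the smallest carrier $N$ of $\nu$ remains a carrier of $\EE[\nu]$ is a small point of care the paper glosses over as ``trivial''), and the double-sum variance formula is exactly the paper's ``repeatedly apply $\VV[X+Y]=\VV[X]+\VV[Y]+2\CC[X,Y]$'', i.e.\ bilinearity of covariance over a finite sum. The one place you genuinely diverge is the non-identity $\VV[\phi(\nu)]\neq\barphi(\VV[\nu])$: you propose a two-player counterexample with \emph{correlated} payoffs, whereas the paper argues structurally that correlation is beside the point --- even under mutual independence of all payoffs the double sum collapses to its diagonal and becomes $\sum_{S\subseteq N\backslash\{i\}} c_{|S|}^2\left(\VV[\nu_{S\cup i}]+\VV[\nu_S]\right)$, a positively weighted \emph{sum} of variances with squared coefficients, which cannot agree with the signed single sum $\sum_{S\subseteq N\backslash\{i\}} c_{|S|}\left(\VV[\nu_{S\cup i}]-\VV[\nu_S]\right)$. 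Both routes are valid certificates of the non-identity; the paper's buys the sharper conceptual message (the discrepancy lies in the sign structure and weights, not in the presence of cross-coalition correlations) and needs no example construction, while yours produces an explicit game but slightly obscures the fact that independence already suffices. If you do carry out your counterexample, remember that $\nu(\emptyset)=\delta_{\nu_0}$ forces $\VV[\nu_\emptyset]=0$; for instance, with two players, independent payoffs, and all nonempty-coalition variances equal to $\sigma^2>0$, one gets $\VV[\phi_1(\nu)]=\tfrac{3}{4}\sigma^2$ versus $\barphi_1(\VV[\nu])=\tfrac{1}{2}\sigma^2$, which settles the claim with no correlation at all.
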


Proposition~\ref{lemma: mean_and_var-game} highlights the difference between variances of stochastic Shapley values, which capture the propagated uncertainties through the s-game, and deterministic Shapley values of variance games~\citep{colini-baldeschi_variance_2018,galeotti_comparison_2021}, i.e. deterministic game $\bar{\nu}(S) = \VV[\nu(S)]$ for all $S\subseteq \Omega$ for some stochastic game $\nu$. 
Therefore, in the context of model explanations, variance-based allocation techniques such as the Shapley effects~\citep{owen2014sobol} and the work of \citet{fryer2020shapley}, where they computed Shapley values on games constructed using the variance and coefficient of determination $R^2$ respectively, cannot be used to capture uncertainty of explanations, as those approaches are themselves explaining the model variance instead. Besides stochastic Shapley values, it is possible to leverage the framework of coalition interval games~\citep{branzei2010cooperative} and interval Shapley values~\citep{alparslan2010interval} to provide confidence intervals for feature explanations, as \citet{napolitano_learning_nodate} demonstrated. However, their approach is based on confidence interval of predictive models, thus this frequentist approach is not applicable to Bayesian models such as GPs and the obtained uncertainty around explanations have very different interpretations as well.


At first glance, computing the variance of each stochastic Shapley value seems cumbersome as it requires summing over all possible coalitions twice. However, there exists a compact expression for the full covariance matrix of stochastic Shapley values across players for stochastic games constructed using a Gaussian process model, which we demonstrate in the following section.

\section{Explaining GPs with GP-SHAP}
\label{sec: gp_shap}

In Section~\ref{subsec: formulating_ssv_for_gp}, we review the process of constructing Shapley values for model explanations using deterministic cooperative games. We then proceed to formalize the stochastic cooperative game induced by the GP functions and demonstrate that the resulting stochastic Shapley values follow a multivariate Gaussian distribution.  Section~\ref{subsec: gpshaps} is dedicated to the estimation procedure and the introduction of two algorithms for computing GP explanations: GP-SHAP and BayesGP-SHAP. GP-SHAP provides explanations that incorporate the GP predictive uncertainty, while BayesGP-SHAP extends this further by accounting for the additional estimation uncertainty arising in the weighted least squares procedure. This extension is similar to how BayesSHAP introduced by \citet{slack_reliable_2021} extends SHAP.

\vspace{-0.2em}
\subsection{Formulating stochastic Shapley values for GP models}
\vspace{-0.2em}

\label{subsec: formulating_ssv_for_gp}

Let $X, Y$ be random variables~(rvs) taking values in the $d$-dimensional instance space $\cX\subseteq \RR^d$ and label space $\cY$~(could be in $\RR$ or discrete) respectively with the joint distribution $p(X, Y)$. 
We use $[d]:=\{1,...,d\}$ to denote feature index set and $S\subseteq [d]$ as the feature index subset. For supervised learning with GPs, the usual
goal is to model $P(Y\mid X=x) = \psi(f(\bfx))$~\citep{rasmussen2006gaussian} where $f$ is the GP function of interest and $\psi$ is a problem-specific transformation, e.g., $\psi$ is the identity map for regression and sigmoid transformation for classification problem. We then posit a prior $f\sim \cG\cP(0, k)$ where $k:\cX\times\cX\to\RR$ is a covariance kernel, and compute the posterior distribution (or its approximation) $p(f \mid \bfD)$, which is typically also a GP with posterior mean $\tilm$ and kernel $\tilk$.



\textbf{Explaining deterministic $f$ with d-game. } To explain a deterministic function $f$, we create a d-game by treating each feature as a player and constructing the game with $f$. The resulting DSVs then provide feature attributions that satisfy various favourable properties, such as the sum of the attribution equals to the prediction made at that specific point, identical features receive the same attribution, and null feature receives zero attribution~\citep{lundberg2017unified}. The d-game for local attribution on an observation $\bfx$ typically involves computing the expected value of $f(X)$ using a reference distribution $r(X\mid X_S=\bfx_S)$ for different feature subsets $S\subseteq [d]$,
\begin{align}
    \barnu_f(\bfx, S) := \EE_r[f(X) \mid X_S=\bfx_S] .
\end{align}
This approach, also known as removal-based explanation~\citep{covert2021explaining}, has been used in various explanation algorithms~\citep{lundberg2017unified,lundberg2018consistent,chau2022rkhs,hu2022explaining}. These d-games determine the ``worth'' of features in $S$ by looking at the expectation after ``removing'' the contribution of other features in $[d]\backslash S$ through integration. The choice of reference distribution leads to explanations with different properties, such as improved locality of estimation~\citep{ghalebikesabi2021locality}, promotion of fairness~\citep{mase2021cohort}, or the incorporation of causal knowledge~\citep{frye2019asymmetric,heskes2020causal}. In this paper, we focus on the scenario where $r$ corresponds to the data distribution, i.e., $r(X\mid X_S) = p(X\mid X_S)$, as it is one of the most frequently used approaches in the literature~\citep{chen2020true,frye2020shapley,chau2022rkhs,hu2022explaining}. Although different reference functions result in different estimation procedures, our formulation of s-games and SSVs for GPs still holds.

\textbf{Induced s-game by stochastic $f$. } Now suppose $f$ is a random function with a posterior distribution $p(f\mid \bfD)$. The corresponding stochastic cooperative game with players $[d]$ can be defined analogously to the deterministic case as $\nu_{p(f\mid \bfD)}: \cX \times 2^{[d]} \to \cL_2(\RR)$ such that, for a given observation $\bfx \in \cX$ and feature subset $S\subseteq [d]$, the stochastic payoff is
\begin{align}
\nu_{p(f\mid \bfD)}(\bfx, S) = \EE_{X}\left[f(X) \mid X_S=\bfx_S \right].
\end{align}
For brevity, we write $\nu_{p(f\mid \bfD)}$ as $\nu_f$, bearing in mind that $\nu_f$ is a random function since $f$ is a random function. In particular, when $f$ is a GP with mean function $\tilm$ and covariance function $\tilk$, the stochastic payoff $\nu_f$ is also a GP indexed by $\bfx$ and $S$.

\begin{restatable}[Stochastic game $\nu_f$ as induced GP]{proposition}
{stochasticgamegp}
 Let $f\sim \cG\cP(\tilm, \tilk)$ with integrable sample paths, i.e. $\int_\cX |f| dp_X < \infty$ almost surely. The stochastic payoff function $\nu_f$ induced by $f$ is a Gaussian process with the following mean and covariance functions:
\begin{align}
    m_\nu(\bfx, S) &:= \EE_X[\tilm(X) \mid X_S=\bfx_S], \\ 
    k_\nu\left((\bfx, S), (\bfx', S')\right) &:= \EE_{X, X'}\left[\tilk(X, X')\mid X_S=\bfx_S, X'_{S'}=\bfx'_{S'}\right].
\end{align}    
\end{restatable}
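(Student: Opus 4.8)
The plan is to exploit the fact that $\nu_f(\bfx, S)$ is a \emph{linear functional} of the random function $f$, and that Gaussianity is preserved under such operations. Writing the reference measure as $r_{\bfx, S} := p(\,\cdot \mid X_S = \bfx_S)$, we have $\nu_f(\bfx, S) = \int_\cX f(z)\, r_{\bfx,S}(dz)$, i.e.\ an integral of $f$ against a fixed probability measure. Assuming a measurable (separable) version of $f$, the integrability of sample paths ensures this integral is almost surely well defined and finite, so that $\nu_f$ is a bona fide random field indexed by $(\bfx, S)$. To establish that it is a Gaussian process, I would verify the defining property that every finite-dimensional marginal is jointly Gaussian.

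First I would reduce joint Gaussianity to a one-dimensional statement via the Cram\'er--Wold device: for indices $(\bfx_1, S_1), \dots, (\bfx_n, S_n)$ and scalars $a_1, \dots, a_n$, the linear combination $\sum_j a_j\, \nu_f(\bfx_j, S_j)$ equals $\int_\cX f(z)\, \mu(dz)$, where $\mu := \sum_j a_j\, r_{\bfx_j, S_j}$ is a finite signed measure. It therefore suffices to show that $\int_\cX f\, d\mu$ is a univariate Gaussian random variable for every such $\mu$. I would do this by approximation: discretising $\mu$ yields finite sums $\sum_k f(z_k)\,\mu(A_k)$, which are linear combinations of the jointly Gaussian vector $(f(z_1), \dots, f(z_m))$ and hence Gaussian; passing to the limit and invoking the fact that an $L^2$ (equivalently, in-probability) limit of Gaussian random variables is again Gaussian gives the claim. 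This is the step I expect to be the main obstacle, since it requires the measure-theoretic bookkeeping to guarantee that the approximating sums converge (using the integrability and second-moment hypotheses) and that the limit is taken in a mode under which the Gaussian class is closed.

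With Gaussianity in hand, the mean and covariance follow from Fubini-type exchanges of $\EE_f$ with the reference integrals, justified by the stated integrability. For the mean, $\EE_f[\nu_f(\bfx, S)] = \EE_f \int f\, dr_{\bfx,S} = \int \EE_f[f(z)]\, r_{\bfx,S}(dz) = \int \tilm(z)\, r_{\bfx,S}(dz) = \EE_X[\tilm(X) \mid X_S = \bfx_S]$. For the covariance, I would subtract the means, write the product of the two integrals (against the reference measures $r_{\bfx,S}$ and $r_{\bfx',S'}$) as a single integral over the product measure, and exchange expectation and integration to obtain $\int\!\int \EE_f[(f(z)-\tilm(z))(f(z')-\tilm(z'))]\, r_{\bfx,S}(dz)\, r_{\bfx',S'}(dz') = \int\!\int \tilk(z,z')\, r_{\bfx,S}(dz)\, r_{\bfx',S'}(dz')$, which is exactly $\EE_{X,X'}[\tilk(X,X') \mid X_S = \bfx_S, X'_{S'} = \bfx'_{S'}]$ with $X, X'$ independent draws from the two conditionals. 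Finiteness of $\tilk$ integrated against the reference measures, guaranteed by the second-moment assumption, ensures that all of these interchanges and the covariance integral are legitimate.
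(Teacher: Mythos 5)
Your proof is correct in substance, but it takes a genuinely different route from the paper: the paper does not argue from first principles at all, it simply invokes Proposition 3.2 of \citet{chau_deconditional_2021} (the conditional mean process result) applied to the conditional distribution $P(X \mid X_S = \bfx_S)$, which is precisely the statement that conditional expectations of a GP again form a GP with the stated mean and covariance. What you have done is, in effect, reconstruct the proof of that cited result: Cram\'er--Wold reduction of joint Gaussianity to scalar Gaussianity, approximation of $\int_\cX f \, d\mu$ by finite Gaussian linear combinations, closure of the Gaussian family under $L^2$/in-probability limits, and Fubini for the mean and covariance formulas. Your route buys self-containedness and makes the hypotheses explicit (joint measurability of $f$, second-moment conditions so that the covariance integral is finite), whereas the paper buys brevity by outsourcing exactly those conditions to the cited proposition. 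One technical remark on the step you flagged as the main obstacle: for merely measurable sample paths the naive discretization $\sum_k f(z_k)\,\mu(A_k)$ need not converge to $\int f \, d\mu$, so the clean way to close this gap is to view $z \mapsto f(z)$ as an $L^2(\PP)$-valued map and identify the pathwise integral with its Bochner (or Pettis) integral against $\mu$; that integral is by construction an $L^2(\PP)$-limit of finite linear combinations of the $f(z_k)$, hence lies in the closed linear span of a Gaussian family and is therefore Gaussian, with no appeal to sample-path continuity.
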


In other words, the stochastic game $\nu_f$ now assigns a payoff distribution to each feature subset $S$, where the variability arise from taking conditional expectation on a random function $f$. We note that this GP is also known as the conditional mean process, previously studied in \citet{chau_bayesimp_2021,chau_deconditional_2021}.


\textbf{Stochastic Shapley values as multivariate Gaussians for GPs. } Given the s-game $\nu_f$, we can characterise the corresponding SSVs using the weighted least squares formulation of DSVs~\citep{charnes1988extremal,lundberg2017unified}. 
For each coalition $S_j\subseteq [d]$, let $z_j \in \{0,1\}^d$ be the binary vector such that $z_j[i] = 1$ if $i\in S_j$ and $\bfZ \in \{0, 1\}^{2^d \times d}$ the concatenation of all $z_j$ vectors. When the context is clear, we use $\bfZ$ and $[d]$ interchangeably. Let $\bfW \in \RR^{2^d \times 2^d}$ be the diagonal matrix with entries $W_{jj} = w(S_j) = \frac{d-1}{{d \choose |S_j|}|S_j|(d-|S_j|)}$ for all subsets with size $0 < |S_j| < d$. When $S_j$ has 0 or $d$ elements, the weights are set to $\infty$ to enforce the efficiency axiom. 

\begin{restatable}[Stochastic Shapley values of $\nu_f$]{theorem}{stochasticsvform}
\label{prop: ssv_for_nu_f}
Let $\nu_f$ be an induced stochastic game from the GP $f\sim\mathcal{GP}(\tilm,\tilk)$ and denote $\bfv_\bfx := [\nu_f(\bfx, S_1), \ldots \nu_f(\bfx, S_{2^d})]^\top$ the vector of stochastic payoffs across all coalitions, then the corresponding stochastic Shapley values $\phi(\nu_f(\bfx, \cdot))$ follows a $d$-dimensional multivariate Gaussian distribution,
\begin{align}
    \phi(\nu_f(\bfx, \cdot)) \sim \mathcal N(\bfA\EE[\bfv_\bfx], \bfA\VV[\bfv_x]\bfA^\top) \quad \text{with} \quad \bfA:= (\bfZ^\top \bfW \bfZ)^{-1}\bfZ^\top \bfW,
\end{align}
where $\EE[\bfv_x]\in\RR^{2^d}$ and $\VV[\bfv_x]\in\RR^{2^d\times 2^d}$ are the corresponding mean vector and covariance matrix of the payoffs.    
\end{restatable}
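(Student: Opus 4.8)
The plan is to reduce the claim to two facts and then combine them via the affine-transformation property of Gaussian vectors. The two facts are: (i) the stacked payoff vector $\bfv_\bfx$ is jointly Gaussian, and (ii) the stochastic Shapley value map is a \emph{fixed} (deterministic) linear image $\bfA\bfv_\bfx$ of that vector. Granting these, since $\bfA$ does not depend on the random draw of $f$, the image $\bfA\bfv_\bfx$ is again Gaussian with mean $\bfA\,\EE[\bfv_\bfx]$ and covariance $\bfA\,\VV[\bfv_\bfx]\,\bfA^\top$, which is exactly the assertion.

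For fact (i), I would invoke the preceding Proposition characterising $\nu_f$ as an induced GP: under the sample-path integrability hypothesis, $\nu_f$ is a Gaussian process indexed by $(\bfx,S)$ with mean $m_\nu$ and covariance $k_\nu$. Fixing the evaluation point $\bfx$ and letting $S$ range over the $2^d$ coalitions $S_1,\dots,S_{2^d}$, the vector $\bfv_\bfx = [\nu_f(\bfx,S_1),\dots,\nu_f(\bfx,S_{2^d})]^\top$ is a finite collection of evaluations of this GP. By the defining property of a Gaussian process, any such finite collection is jointly Gaussian, so $\bfv_\bfx \sim \mathcal N(\EE[\bfv_\bfx],\VV[\bfv_\bfx])$, with the entries of the mean vector and covariance matrix obtained by evaluating $m_\nu$ and $k_\nu$ at the relevant coalition pairs.

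For fact (ii), I would start from the SSV expression in Equation~\eqref{eq:ssv}, which holds at the random-variable level: each $\phi_i(\nu_f(\bfx,\cdot))$ is a fixed linear combination $\sum_{S\subseteq N\backslash\{i\}} c_{|S|}\,(\nu_f(\bfx,S\cup i) - \nu_f(\bfx,S))$ of the entries of $\bfv_\bfx$. The remaining step is to recognise that this same linear combination is exactly the one produced by the weighted least squares solution with design matrix $\bfZ$ and weight matrix $\bfW$, i.e.\ that $\bfA = (\bfZ^\top\bfW\bfZ)^{-1}\bfZ^\top\bfW$ reproduces the Shapley coefficients $c_{|S|}$ precisely. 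This is the classical characterisation of Shapley values as minimisers of the weighted least squares objective due to~\citet{charnes1988extremal}, also used by~\citet{lundberg2017unified}. Since the identity of the coefficients is deterministic and holds entrywise, it transfers verbatim to our random-variable-valued payoffs, giving $\phi(\nu_f(\bfx,\cdot)) = \bfA\bfv_\bfx$.

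The main obstacle is making the weighted least squares identity $\phi(\nu_f(\bfx,\cdot)) = \bfA\bfv_\bfx$ fully rigorous, in particular handling the Shapley kernel weights $w(S_j)$ that are set to $\infty$ for $|S_j|\in\{0,d\}$. These infinite weights encode the efficiency axiom as a hard equality constraint rather than a penalty, so $\bfA$ must be interpreted as the solution of the corresponding equality-constrained least squares problem (equivalently, as a limit of finite-weight solutions). I would either cite the exact-recovery result directly or verify it by checking that the constrained normal equations return precisely the coefficients $c_{|S|}$ of Equation~\eqref{eq:ssv}. Once this equivalence is in place, the affine-map argument of the first paragraph immediately yields the stated mean and covariance, completing the proof.
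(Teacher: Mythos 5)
Your proposal is correct and follows essentially the same route as the paper's own proof: express the stochastic Shapley values as the fixed linear image $\bfA\bfv_\bfx$ via the weighted least squares characterisation of \citet{charnes1988extremal,lundberg2017unified}, note that $\bfv_\bfx$ is jointly Gaussian because $\nu_f$ is a GP, and conclude by the affine-transformation property of Gaussian vectors. If anything, you are more careful than the paper: you correctly attribute the Gaussianity of $\bfv_\bfx$ to the induced-GP proposition (the paper instead cites the estimation proposition, Proposition~\ref{prop: cmp}, which is a slight mis-reference), and you explicitly flag the equality-constraint interpretation of the infinite weights for $|S|\in\{0,d\}$, which the paper leaves implicit in its citation of \citet[Theorem 2]{lundberg2017unified}.
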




The derivation is straightforward using the fact that the stochastic payoffs $\bfv_\bfx$ are multivariate Gaussian rvs and the matrix $\bfA$ is obtained from the weighted regression formulation of Shapley values. The resulting variance of $\phi(\nu_f(\bfx, \cdot))$ can then be interpreted as the uncertainties around the explanations, propagated from the posterior variance of $f$. 



\vspace{-0.2em}
\subsection{Estimation algorithms: the GP-SHAP and the BayesGP-SHAP}
\vspace{-0.2em}

\label{subsec: gpshaps}
In this section, we introduce our main algorithm GP-SHAP and its variant BayesGP-SHAP to estimate the stochastic Shapley values for GP posterior distributions. 

\textbf{Formulating GP-SHAP. } Given a set of observations $\bfD = \{(\bfx_i, y_i)\}_{i=1}^n = (\bfX, \bfy)$ with GP prior $f\sim \cG\cP(0, k)$, we can obtain the posterior GP $f\mid \bfD \sim \cG\cP(\tilm, \tilk)$ by using the Gaussian conditioning rule for regression or e.g. the variational or Laplace approximation for classification. In fact, the conditional expectations of mean and covariance function from a posterior GP can be estimated using conditional mean embeddings~\citep{muandet2016kernel} without the need of explicit density estimations, following derivations of \citet{chau_bayesimp_2021, chau_deconditional_2021}. We provide technical discussion on conditional mean embeddings in the appendix.



\begin{restatable}[Estimating $\nu_f$]{proposition}{estimationnu}
    \label{prop: cmp}
    Given $\bfD = (\bfX, \bfy)$ and the posterior GP $f \mid \bfD \sim \cG\cP(\tilm, \tilk)$, the mean and covariance function of the stochastic cooperative game $\nu_f$ can be estimated as,
    \begin{align}
        \hat{m}_\nu(\bfx, S) = \bfb(\bfx, S)^\top \tilm(\bfX), \quad \quad \hat{k}_\nu\left((\bfx, S), (\bfx', S')\right) = \bfb(\bfx, S)^\top\tilde{\bfK}_{\bfX\bfX}\bfb(\bfx', S'),
    \end{align}
    where $\bfb(\bfx, S):=(\bfK_{\bfX_S\bfX_S} + \lambda I)^{-1}k_S(\bfX_S, \bfx_S)$, $\tilm(\bfX) = [\tilm(\bfx_1),\ldots, \tilm(\bfx_n)]^\top$, and $k_S: \cX_S \times \cX_S \to \RR$ is the kernel defined on the sub-feature space of $\cX$ and we write $k_S(\bfx_S, \bfX_S) := [k_S(\bfx_S, {\bfx_1}_S), ..., k_S(\bfx_S, {{\bfx_n}_S})]$ and $\bfK_{\bfX\bfX}$ and $\tilde{\bfK}_{\bfX\bfX}$ as the gram matrix of $\bfX$ using kernel $k$ and $\tilk$ respectively. 
    The parameter $\lambda > 0$ is a fixed hyperparameter to stabilise the inversion.
\end{restatable}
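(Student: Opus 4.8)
The plan is to realise both $m_\nu$ and $k_\nu$ as inner products against conditional mean embeddings and then substitute the standard regularised empirical estimator of those embeddings. Recall the conditional mean embedding $\mu_{X\mid X_S=\bfx_S} := \EE[k(X,\cdot)\mid X_S=\bfx_S]\in\cH_k$, whose empirical Tikhonov-regularised estimate is $\hat\mu_{X\mid X_S=\bfx_S} = \sum_{i=1}^n b_i(\bfx,S)\,k(\bfx_i,\cdot)$ with coefficient vector $\bfb(\bfx,S)=(\bfK_{\bfX_S\bfX_S}+\lambda I)^{-1}k_S(\bfX_S,\bfx_S)$. The two structural facts I will use are that, for the posterior GP, the mean $\tilm(\cdot)=k(\bfX,\cdot)^\top\balpha$ lies in $\cH_k$ and the covariance has the ``kernel minus finite rank'' form $\tilk(x,x')=k(x,x')-k(\bfX,x)^\top M\,k(\bfX,x')$ for a fixed matrix $M$; this form covers Gaussian conditioning for regression as well as the Laplace and variational approximations used for classification. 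From the preceding induced-GP proposition I start with $m_\nu(\bfx,S)=\EE_X[\tilm(X)\mid X_S=\bfx_S]$ and $k_\nu((\bfx,S),(\bfx',S'))=\EE_{X,X'}[\tilk(X,X')\mid X_S=\bfx_S,\,X'_{S'}=\bfx'_{S'}]$.

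First I would handle the mean. Since $\tilm\in\cH_k$, the defining property of the conditional mean embedding gives $m_\nu(\bfx,S)=\langle \tilm,\mu_{X\mid X_S=\bfx_S}\rangle_{\cH_k}$. Substituting the empirical embedding and applying the reproducing property $\langle\tilm,k(\bfx_i,\cdot)\rangle_{\cH_k}=\tilm(\bfx_i)$ collapses the inner product to $\sum_i b_i(\bfx,S)\,\tilm(\bfx_i)=\bfb(\bfx,S)^\top\tilm(\bfX)$, which is exactly $\hat m_\nu$.

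Next the covariance, where the main work lies. Here I would use the independence of the two copies $X,X'$ under the reference distribution so that the double conditional expectation factorises across the two arguments, and then treat the two summands of $\tilk$ separately. For the leading term, the identity $\EE_{X,X'}[k(X,X')\mid X_S=\bfx_S,\,X'_{S'}=\bfx'_{S'}]=\langle\mu_{X\mid X_S=\bfx_S},\,\mu_{X'\mid X'_{S'}=\bfx'_{S'}}\rangle_{\cH_k}$ yields, empirically, $\bfb(\bfx,S)^\top\bfK_{\bfX\bfX}\bfb(\bfx',S')$. For the finite-rank term, I would push each conditional expectation through the fixed matrix $M$, using $\EE_X[k(\bfX,X)\mid X_S=\bfx_S]=\bfK_{\bfX\bfX}\bfb(\bfx,S)$ (again from the empirical embedding and the reproducing property), to obtain $\bfb(\bfx,S)^\top\bfK_{\bfX\bfX}M\bfK_{\bfX\bfX}\bfb(\bfx',S')$. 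Subtracting, the middle matrix becomes $\bfK_{\bfX\bfX}-\bfK_{\bfX\bfX}M\bfK_{\bfX\bfX}$, which I recognise as precisely the Gram matrix $\tilde{\bfK}_{\bfX\bfX}$ of the posterior kernel evaluated at the training inputs, since $\tilk(\bfx_i,\bfx_j)=k(\bfx_i,\bfx_j)-k(\bfX,\bfx_i)^\top M\,k(\bfX,\bfx_j)$. This gives $\hat k_\nu((\bfx,S),(\bfx',S'))=\bfb(\bfx,S)^\top\tilde{\bfK}_{\bfX\bfX}\bfb(\bfx',S')$.

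The step I expect to be the main obstacle is keeping the covariance argument clean and self-contained: justifying the factorisation of the double conditional expectation, and avoiding the more abstract route of viewing $\tilk\in\cH_k\otimes\cH_k$ and applying the embedding to both tensor slots at once, which would require controlling the fact that $k$ itself need not lie in the tensor-product RKHS without an additional trace-class assumption. I would sidestep this entirely by working with the explicit ``kernel minus finite rank'' form of $\tilk$, so that only the single-argument embedding identities are ever invoked and the replacement of the population embeddings by their regularised empirical counterparts (with $\lambda>0$ stabilising the inversion) is the sole approximation.
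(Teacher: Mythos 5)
Your proof is correct and follows essentially the same route as the paper's: both realise the conditional expectations as inner products with conditional mean embeddings, substitute the regularised empirical embeddings with coefficient vector $\bfb(\bfx,S)$, and collapse the resulting expression so that the middle matrix is recognised as the posterior Gram matrix $\tilde{\bfK}_{\bfX\bfX}$. The only difference is presentational: the paper instantiates the argument with the explicit GP-regression posterior and declares this without loss of generality, whereas you carry a generic ``kernel minus finite rank'' form $\tilk(\bfx,\bfx') = k(\bfx,\bfx') - k(\bfX,\bfx)^\top M\, k(\bfX,\bfx')$, which makes that WLOG claim explicit and also covers the Laplace and variational posteriors used for classification.
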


It is worth noting that the estimation $\hat{m}_\nu$ coincides with the one deployed in RKHS-SHAP~\citep{chau2022rkhs} since we are also utilising conditional mean embeddings for the estimation of conditional expectation of RKHS functions. In addition, we obtain the analytical covariance function that is used to estimate the covariance of the stochastic Shapley values in the following proposition:
\begin{restatable}[\textbf{GP-SHAP}]{proposition}{gpshap}
    \label{prop: gp_shap}
Let the matrix $\bfA$ be defined as in Theorem~\ref{prop: ssv_for_nu_f}. The mean and covariance for the multivariate stochastic Shapley values can be estimated as,
\begin{align}\small
    \phi\left(\hat{\nu}_f(\bfx, \cdot)\right) = \cN\left(\bfA\bfB(\bfx, [d])^\top \tilm(\bfX), \bfA\bfB(\bfx, [d])^\top \tilde{\bfK}_{\bfX\bfX}\bfB(\bfx, [d])\bfA^\top \right)
\end{align}
where $\bfB(\bfx, [d]) = [\bfb(\bfx, [d]_1),\ldots, \bfb(\bfx, [d]_{2^d})]^\top$.
\end{restatable}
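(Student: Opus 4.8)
The plan is to prove this by directly substituting the estimators of Proposition~\ref{prop: cmp} into the closed form of Theorem~\ref{prop: ssv_for_nu_f}, so no new probabilistic content is required --- only that the plug-in game $\hat\nu_f$ remains a Gaussian process and that the per-coalition estimators assemble into the stated matrix products. First I would observe that the functions $\hat{m}_\nu$ and $\hat{k}_\nu$ from Proposition~\ref{prop: cmp} constitute a valid mean and (positive semi-definite) covariance function, since $\hat{k}_\nu((\bfx,S),(\bfx',S')) = \bfb(\bfx,S)^\top \tilde{\bfK}_{\bfX\bfX} \bfb(\bfx',S')$ is a bilinear form in the Gram matrix $\tilde{\bfK}_{\bfX\bfX}\succeq 0$. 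Hence $\hat\nu_f$ is itself a Gaussian process indexed by $(\bfx,S)$, and Theorem~\ref{prop: ssv_for_nu_f} applies verbatim with $\EE[\bfv_\bfx]$ and $\VV[\bfv_\bfx]$ replaced by the mean vector and covariance matrix of the estimated payoff vector $\hat{\bfv}_\bfx = [\hat\nu_f(\bfx,S_1),\dots,\hat\nu_f(\bfx,S_{2^d})]^\top$.

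Second I would assemble these two objects coalition by coalition. By Proposition~\ref{prop: cmp}, the $j$-th entry of $\EE[\hat{\bfv}_\bfx]$ is $\hat{m}_\nu(\bfx,S_j) = \bfb(\bfx,S_j)^\top \tilm(\bfX)$; stacking over $j=1,\dots,2^d$ and using the definition $\bfB(\bfx,[d]) = [\bfb(\bfx,[d]_1),\dots,\bfb(\bfx,[d]_{2^d})]^\top$ gives $\EE[\hat{\bfv}_\bfx] = \bfB(\bfx,[d])^\top \tilm(\bfX)$. Similarly, the $(j,j')$ entry of $\VV[\hat{\bfv}_\bfx]$ is $\hat{k}_\nu((\bfx,S_j),(\bfx,S_{j'})) = \bfb(\bfx,S_j)^\top \tilde{\bfK}_{\bfX\bfX} \bfb(\bfx,S_{j'})$. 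The one genuinely load-bearing step is recognizing that this family of scalar bilinear forms is exactly the $(j,j')$ entry of the single matrix product $\bfB(\bfx,[d])^\top \tilde{\bfK}_{\bfX\bfX} \bfB(\bfx,[d])$, so that $\VV[\hat{\bfv}_\bfx] = \bfB(\bfx,[d])^\top \tilde{\bfK}_{\bfX\bfX} \bfB(\bfx,[d])$ with the common factors $\bfB(\bfx,[d])$ pulled out of every entry simultaneously.

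Finally I would substitute both assembled quantities into the mean $\bfA\EE[\bfv_\bfx]$ and covariance $\bfA\VV[\bfv_\bfx]\bfA^\top$ of Theorem~\ref{prop: ssv_for_nu_f}, yielding mean $\bfA\bfB(\bfx,[d])^\top\tilm(\bfX)$ and covariance $\bfA\bfB(\bfx,[d])^\top\tilde{\bfK}_{\bfX\bfX}\bfB(\bfx,[d])\bfA^\top$ as claimed, with Gaussianity preserved because $\phi(\hat\nu_f(\bfx,\cdot))$ is an affine ($\bfA$-linear) image of the Gaussian vector $\hat{\bfv}_\bfx$. The argument has no real obstacle beyond careful index bookkeeping; the only points to watch are (i) keeping the transpose/stacking convention for $\bfB(\bfx,[d])$ consistent so that the dimensions of $\bfB(\bfx,[d])^\top\tilm(\bfX)\in\RR^{2^d}$ and $\bfA\in\RR^{d\times 2^d}$ line up, and (ii) noting that $\bfA=(\bfZ^\top\bfW\bfZ)^{-1}\bfZ^\top\bfW$ depends on neither $\bfx$ nor the randomness of $f$, so it factors cleanly through both the expectation and the covariance as a deterministic linear map.
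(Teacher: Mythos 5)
Your proposal is correct and follows essentially the same route as the paper's own proof: substitute the estimators $\hat{m}_\nu$ and $\hat{k}_\nu$ from Proposition~\ref{prop: cmp} into the affine representation $\phi(\hat{\nu}_f(\bfx,\cdot)) = \bfA\hat{\bfv}_\bfx$ of Theorem~\ref{prop: ssv_for_nu_f}, stack the per-coalition entries into $\bfB(\bfx,[d])^\top\tilm(\bfX)$ and $\bfB(\bfx,[d])^\top\tilde{\bfK}_{\bfX\bfX}\bfB(\bfx,[d])$, and conclude by Gaussianity under the deterministic linear map $\bfA$. Your added remarks on positive semi-definiteness of $\hat{k}_\nu$ and on $\bfA$ being independent of $\bfx$ and of the randomness in $f$ are sound bookkeeping that the paper leaves implicit.
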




    

The complete algorithm, along with a discussion of computational techniques deployed to reduce computation time, such as vectorisation across $\bfX$ using tensor operations and incorporation of the sparse GP formulation to speed up computations of quantities from Propositions~\ref{prop: cmp} and \ref{prop: gp_shap}, is provided in the appendix. It is worth noting that subsampling coalitions to reduce computational cost while estimating Shapley values is also a standard approach in SHAP algorithm implementations~\citep{lundberg2017unified}. Empirically, this procedure has been shown to give an unbiased estimate of the true DSVs, with the variance decreasing at a rate of $\cO(\frac{1}{\ell})$, where $\ell$ is the number of coalition samples~\citep{covert_improving_2021}. However, this approach incurs an additional source of uncertainty, due to estimation, and we propose to incorporate that as well into our framework by utilising the BayesSHAP approach proposed by \citet{slack_reliable_2021}.


\textbf{Formulating BayesGP-SHAP.  } To capture the estimation uncertainty, \citet{slack_reliable_2021} proposed BayesSHAP and reformulated the WLS procedure as a hierarchical Bayesian weighted least square and studied the corresponding posterior distribution over the deterministic Shapley values. We provide the hierarchical data generation process below, with an abuse of notations, \textbf{for a deterministic $f$}, we have
\begin{align}\small
    \barnu \mid z, \barphi, \epsilon, f, \bfx \sim \barphi^\top z + \epsilon &\hspace{1cm} \epsilon\mid z \sim \cN(0, \sigma^2 w(z)^{-1}) \\
    \barphi \mid \sigma^2 \sim \cN(0, \sigma^2 I) &\hspace{1cm} \sigma^2 \sim \operatorname{Inv-}\chi^2(\ell_0, \sigma_0^2)
\end{align}
where $w$ and $z$ are the weight function and binary vector respectively introduced in Theorem~\ref{prop: ssv_for_nu_f}, and $\ell_0$, $\sigma_0^2$ are two hyperparameters, typically set to small values to keep priors uninformative. \citet{slack_reliable_2021} showed that the posterior distribution on $\sigma^2$ and $\barphi$ follows a scaled $\operatorname{Inv-}\chi^2$ and normal respectively, due to their corresponding conjugacies with the likelihood. 
\begin{restatable}[BayesSHAP~\citep{slack_reliable_2021}]{proposition}{bayesshap}
\label{prop: bayesshap}
Given the data generation above, the posterior distribution on $\barphi$ and $\sigma^2$ follows:
\begin{align}
\small
    \barphi \mid \sigma^2, \bfZ_\ell, f, \bfx, \bfD &\sim \cN(\bfA_\ell\bar{\bfv}_\bfx, (\bfZ_\ell^\top \bfW_\ell \bfZ_\ell)^{-1}\sigma^2) \\
    \sigma^2\mid \bfZ_\ell, f, \bfx, \bfD &\sim \operatorname{Scaled-Inv-}\chi^2\left(\ell_0 + \ell, \frac{\ell_0\sigma_0^2 + \ell s^2(\barbfv_\bfx)}{\ell_0 + \ell}\right)
\end{align}
where $\ell$ is the number of coalitions ${\bm{\cS}} = \{S_j\}_{j=1}^\ell$ we sample uniformly from $2^{[d]}$, $\bfZ_\ell$ is the binary matrix representing $\bm{\cS}$, and $\bfW_\ell$ is the corresponding weight matrix, and $\bfA_\ell = (\bfZ_\ell^\top \bfW_\ell \bfZ_\ell)^{-1}\bfZ_\ell^\top \bfW_\ell$ is the WLS matrix, $\bar{\bfv}_\bfx = [\barnu_f(\bfx, S_1),...,\barnu_f(\bfx, S_{\ell})]^\top$ is the vector of deterministic payoffs, and 
\begin{align}
    s^2(\barbfv_\bfx) = \frac{1}{\ell}\left[(\barbfv_\bfx - \bfZ_\ell \bfA_\ell\barbfv_\bfx)^\top W_\ell (\barbfv_\bfx - \bfZ_\ell \bfA_\ell\barbfv_\bfx) + (\bfA_\ell\barbfv_\bfx)^\top (\bfA_\ell\barbfv_\bfx)\right]
\end{align}
measures the average weighted error in the regression and the norm of the mean explanations.     
\end{restatable}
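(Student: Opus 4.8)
The plan is to recognise the hierarchical model as a weighted Bayesian linear regression with a conjugate normal--inverse-$\chi^2$ prior, for which the joint posterior is available in closed form; the statement is precisely the conjugacy result of \citet{slack_reliable_2021}, so the proof is a direct computation rather than a new idea. First I would stack the $\ell$ sampled coalitions into a single linear-Gaussian observation model $\barbfv_\bfx = \bfZ_\ell \barphi + \epsilon$ with heteroscedastic noise $\epsilon \sim \cN(0, \sigma^2 \bfW_\ell^{-1})$, so that the likelihood of $\barbfv_\bfx$ given $(\barphi, \sigma^2)$ is Gaussian with precision $\sigma^{-2}\bfW_\ell$ and carries the weighted residual $(\barbfv_\bfx - \bfZ_\ell \barphi)^\top \bfW_\ell (\barbfv_\bfx - \bfZ_\ell \barphi)$ in its exponent.

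Next I would write the joint density $p(\barbfv_\bfx \mid \barphi, \sigma^2)\, p(\barphi \mid \sigma^2)\, p(\sigma^2)$ and collect the $\barphi$-dependent quadratic terms. Completing the square in $\barphi$ identifies the conditional posterior as Gaussian, whose mean and covariance reduce, in the uninformative prior regime, to the weighted least squares solution $\bfA_\ell \barbfv_\bfx$ and $\sigma^2 (\bfZ_\ell^\top \bfW_\ell \bfZ_\ell)^{-1}$ via the normal equations $\bfZ_\ell^\top \bfW_\ell \bfZ_\ell\, \bfA_\ell \barbfv_\bfx = \bfZ_\ell^\top \bfW_\ell \barbfv_\bfx$; this gives the first displayed claim. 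To obtain the marginal posterior of $\sigma^2$, I would integrate $\barphi$ out: the Gaussian integral contributes a $(\sigma^2)^{d/2}$ factor that cancels the $(\sigma^2)^{-d/2}$ from the prior on $\barphi$, leaving a kernel of the form $(\sigma^2)^{-((\ell_0+\ell)/2 + 1)}\exp\!\big(-\tfrac{1}{2\sigma^2}[\ell_0 \sigma_0^2 + \ell\, s^2(\barbfv_\bfx)]\big)$, which is exactly the scaled inverse-$\chi^2$ with the stated degrees of freedom $\ell_0 + \ell$ and scale $\tfrac{\ell_0 \sigma_0^2 + \ell\, s^2(\barbfv_\bfx)}{\ell_0 + \ell}$.

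The main obstacle is the bookkeeping in this last step: I must verify the algebraic identity that the quadratic form remaining after marginalising $\barphi$ splits precisely into $\ell\, s^2(\barbfv_\bfx)$, namely the weighted regression residual $(\barbfv_\bfx - \bfZ_\ell \bfA_\ell \barbfv_\bfx)^\top \bfW_\ell (\barbfv_\bfx - \bfZ_\ell \bfA_\ell \barbfv_\bfx)$ plus the prior-induced norm $(\bfA_\ell \barbfv_\bfx)^\top (\bfA_\ell \barbfv_\bfx)$. The delicate point is tracking which contributions come from the likelihood versus the $\cN(0, \sigma^2 I)$ prior on $\barphi$, so that the conditional posterior mean stays the unregularised WLS estimator while the prior still surfaces in the $\sigma^2$ scale through $\|\bfA_\ell \barbfv_\bfx\|^2$; everything else is the routine conjugate update. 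Since $\ell_0$ and $\sigma_0^2$ are taken small, the posterior is dominated by the data and reduces to the WLS estimator with a calibrated scale, recovering the result of \citet{slack_reliable_2021}.
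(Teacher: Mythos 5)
Your proposal is correct, but be aware that the paper itself offers no derivation: its proof of this proposition is literally the sentence ``See Section 3.1 of \citet{slack_reliable_2021}''. Your write-up therefore reconstructs the conjugacy argument that this citation points to, and it does so accurately: stacking the sampled coalitions gives the heteroscedastic Gaussian likelihood with precision $\sigma^{-2}\bfW_\ell$, completing the square gives the conditional Gaussian in $\barphi$, and integrating $\barphi$ out produces a $(\sigma^2)^{d/2}$ factor that cancels the $(\sigma^2)^{-d/2}$ from the prior, which is exactly why the degrees of freedom come out to $\ell_0+\ell$ rather than $\ell_0+\ell-d$ (this cancellation requires the prior to scale as $\cN(0,\sigma^2 I)$, as you implicitly use). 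You have also correctly isolated the one delicate point. Exact conjugacy with this proper prior gives the ridge-regularized conditional law
\begin{align*}
\barphi \mid \sigma^2, \bfZ_\ell, f, \bfx, \bfD \sim \cN\bigl((\bfZ_\ell^\top\bfW_\ell\bfZ_\ell + I)^{-1}\bfZ_\ell^\top\bfW_\ell\barbfv_\bfx,\ \sigma^2(\bfZ_\ell^\top\bfW_\ell\bfZ_\ell + I)^{-1}\bigr),
\end{align*}
and the quadratic form left over after marginalizing $\barphi$ satisfies, with $\hat{\barphi}$ denoting this regularized estimator, the identity
\begin{align*}
\barbfv_\bfx^\top\bfW_\ell\barbfv_\bfx - \hat{\barphi}^\top(\bfZ_\ell^\top\bfW_\ell\bfZ_\ell + I)\hat{\barphi} = (\barbfv_\bfx - \bfZ_\ell\hat{\barphi})^\top\bfW_\ell(\barbfv_\bfx - \bfZ_\ell\hat{\barphi}) + \hat{\barphi}^\top\hat{\barphi}.
\end{align*}
The proposition as stated (inherited from BayesSHAP) replaces $\hat{\barphi}$ by the unregularized $\bfA_\ell\barbfv_\bfx$ in both expressions, i.e., it drops the $+I$ in the mean and covariance while retaining the prior's norm contribution inside $s^2$; your ``uninformative prior regime'' caveat is precisely the approximation that licenses this substitution, so your argument establishes the statement in the same sense as the cited source, and is more explicit about where the approximation enters than either the paper or its reference.
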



It is important to highlight that while both GP-SHAP and BayesSHAP have a notion of the ``posterior of Shapley values'', the two sources of uncertainty which these approaches capture are very different: GP-SHAP corresponds to the predictive uncertainty induced by the GP posterior $p(f\mid \bfD)$, whereas BayesSHAP is the uncertainty due to having to estimate Shapley values when their exact computation is infeasible due to the exponential number of coalitions -- and is in \citet{slack_reliable_2021} proposed for a deterministic $f$. 
Nonetheless, by integrating the BayesSHAP posterior of Shapley values through the posterior GP $p(f\mid \bfD)$, we can in fact incorporate both sources of uncertainty, leading to our second algorithm, BayesGP-SHAP.

\begin{restatable}[\textbf{BayesGP-SHAP}]{proposition}{bayesgpshap}
    Continuing from Propositions~\ref{prop: gp_shap} and \ref{prop: bayesshap}, the posterior distribution of the stochastic Shapley values can be estimated using the Bayesian WLS approach as,
\begin{align*}
    \phi \mid \sigma^2, \bfZ_\ell, \bfx, \bfD \sim \cN\left(\bfA_\ell\bfB(\bfx, \bm{\cS}))^\top \tilm(\bfX), \bfA_\ell\bfB(\bfx, \bm{\cS})^\top \tilde{\bfK}_{\bfX\bfX}\bfB(\bfx, \bm{\cS})\bfA_\ell^\top + (\bfZ_\ell^\top \bfW_\ell \bfZ_\ell)^{-1} \sigma^2\right)
\end{align*}
where $\sigma^2$ is sampled from $\sigma^2 \mid \bfZ_\ell \sim \operatorname{Scaled-Inv-}\chi^2\left(\ell_0 + \ell, \frac{\ell_0\sigma^2_0 + \ell s^2(\EE[\bfv_\bfx])}{\ell_0 + \ell}\right)$. 
\end{restatable}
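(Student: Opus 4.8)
The plan is to view BayesGP-SHAP as a two-level linear-Gaussian hierarchy and to integrate out the GP randomness in $f$ by a single application of the standard Gaussian marginalisation identity. The two layers are already supplied by the cited results: Proposition~\ref{prop: bayesshap} furnishes the estimation-noise layer conditional on a fixed payoff vector, while Proposition~\ref{prop: cmp}, stacked over the sampled coalitions, furnishes the GP-induced law of that payoff vector. Combining them is then a mechanical Gaussian convolution.

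First I would condition on the GP sample $f$ and read off from Proposition~\ref{prop: bayesshap} that, for fixed deterministic payoffs $\barbfv_\bfx$, the Bayesian WLS posterior is $\phi \mid \sigma^2, \bfZ_\ell, f, \bfx, \bfD \sim \cN(\bfA_\ell \barbfv_\bfx, (\bfZ_\ell^\top \bfW_\ell \bfZ_\ell)^{-1}\sigma^2)$; the crucial observation is that the covariance $(\bfZ_\ell^\top \bfW_\ell \bfZ_\ell)^{-1}\sigma^2$ does not depend on $f$, while the mean is a fixed linear map $\bfA_\ell$ of the payoffs. Next I would stack the per-coalition estimates of Proposition~\ref{prop: cmp} over the $\ell$ sampled coalitions $\bm{\cS}$ to obtain that, under the GP posterior $f \mid \bfD$, the random payoff vector is Gaussian, $\bfv_\bfx \sim \cN(\bfB(\bfx, \bm{\cS})^\top \tilm(\bfX), \bfB(\bfx, \bm{\cS})^\top \tilde{\bfK}_{\bfX\bfX}\bfB(\bfx, \bm{\cS}))$, with the deterministic $\barbfv_\bfx$ now replaced by this random $\bfv_\bfx$. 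Finally I would invoke the compounding rule: if $\bfv_\bfx \sim \cN(\bm{\mu}, \bfSigma)$ and $\phi \mid \bfv_\bfx \sim \cN(\bfA_\ell \bfv_\bfx, \Psi)$ with $\Psi$ independent of $\bfv_\bfx$, then marginally $\phi \sim \cN(\bfA_\ell \bm{\mu}, \bfA_\ell \bfSigma \bfA_\ell^\top + \Psi)$. Substituting $\bm{\mu}$, $\bfSigma$ and $\Psi = (\bfZ_\ell^\top \bfW_\ell \bfZ_\ell)^{-1}\sigma^2$ yields exactly the claimed mean $\bfA_\ell\bfB(\bfx, \bm{\cS})^\top \tilm(\bfX)$ and the additive covariance $\bfA_\ell\bfB(\bfx, \bm{\cS})^\top \tilde{\bfK}_{\bfX\bfX}\bfB(\bfx, \bm{\cS})\bfA_\ell^\top + (\bfZ_\ell^\top \bfW_\ell \bfZ_\ell)^{-1}\sigma^2$.

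The main thing to get right — more a conceptual point than a hard calculation — is the conditional-independence bookkeeping that licenses the clean additive split of the two variance terms. The estimation residual $\epsilon$ of the BayesSHAP generative model must be taken conditionally independent of the GP sample $f$ given $\sigma^2$, so that the estimation covariance enters additively rather than interacting with the propagated GP covariance $\bfA_\ell\bfB^\top \tilde{\bfK}_{\bfX\bfX}\bfB\bfA_\ell^\top$. A second point worth flagging is the treatment of $\sigma^2$: its scale posterior is a Scaled-Inv-$\chi^2$ whose parameter in Proposition~\ref{prop: bayesshap} depends on $s^2(\barbfv_\bfx)$, and here we deliberately evaluate it at the expected payoffs $s^2(\EE[\bfv_\bfx])$. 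This is a modelling choice that keeps the $\sigma^2$ step decoupled from the GP randomness (otherwise the scale parameter would itself become a functional of the random payoffs), and I would state it explicitly so that conditioning on the sampled $\sigma^2$ is well-defined before the Gaussian marginalisation above is carried out.
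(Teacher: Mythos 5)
Your proposal is correct and follows essentially the same route as the paper's own proof: marginalising the conditional BayesSHAP Gaussian over the GP posterior $p(f \mid \bfD)$ via the standard compounding identity, which makes the GP-SHAP and BayesSHAP covariances add, and then treating the $\sigma^2$ step as a modelling choice by plugging the mean payoffs into $s^2(\cdot)$ to avoid the intractable scaled mixture of normals. Your write-up is in fact more explicit than the paper's (which compresses the middle step into ``a standard Gaussian conjugacy procedure''), and your flagged caveats --- the conditional independence of $\epsilon$ and $f$ given $\sigma^2$, and the deliberate decoupling of the $\sigma^2$ posterior from the random payoffs --- are exactly the points the paper's proof relies on.
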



Conditionally on $\sigma^2$, the posterior variance in BayesGP-SHAP is therefore the sum of the variance from GP-SHAP and BayesSHAP due to Gaussian conjugacies.

\section{Predictive explanations using the Shapley prior} 
\label{sec: shapley_prior}

In this section, we move beyond standard GP explanations by formulating explanation functions of a broader class of models as Gaussian processes. We introduce a \emph{Shapley prior} over the space of vector-valued explanation functions $\phi: \cX\to\RR^d$, which correspond to Shapley values for arbitrary functions $f$. By treating previously obtained Shapley values as regression labels, we can predict explanations for new data without relying on the standard procedure to compute Shapley values. Our framework is not limited to explanations generated from GP-SHAP, but can also be applied to other explanation methods such as TreeSHAP, DeepSHAP, or model-agnostic KernelSHAP. The proposed approach learns the direct mapping from $\cX$ to the space of Shapley values, without the need to access the underlying model $f$ during training, which is different from previous predictive approaches such as FastSHAP~\cite{jethani_fastshap_2022}. In contrast to prior work, \citet{hill_boundary-aware_2023} also considered fitting a GP regression model directly to explanation functions, but their focus is on incorporating the corresponding predictive uncertainty to explanations from black-box classifiers rather than predicting Shapley values for unseen data. 



To achieve this, we leverage the key ideas behind the GP-SHAP algorithm, which uses the facts that GPs remain tractable under conditional expectations (to build the s-game) and linear combinations (to compute the stochastic Shapley values). By applying these properties to a GP prior $\cG\cP(0, k)$ instead of a posterior, we have an induced prior over the space of Shapley functions $\phi$.
\begin{restatable}[The Shapley prior over $\phi$]{proposition}{shapleyprior}
    The prior $f\sim \cG\cP(0, k)$ and the corresponding stochastic game $\nu_f(\bfx, S) = \EE[f(X)\mid X_S=\bfx_S]$ induce a vector-valued GP prior over the explanation functions $\phi \sim \cG\cP(0, \kappa)$ where $\kappa: \cX \times \cX \to \RR^{d \times d}$ is the matrix-valued covariance kernel
\begin{align}
    \kappa(\bfx, \bfx') = \cA(\bfx)^\top \cA(\bfx'), \quad \cA(\bfx) = \Psi(\bfx) \bfA^\top 
\end{align}
where $\Psi(\bfx) = \left[\EE[k(\cdot , X) \mid X_{S_1}=x_{S_1}],\ldots, \EE[k(\cdot , X) \mid X_{S_{2^d}}=x_{S_{2^d}}]\right]$.
\end{restatable}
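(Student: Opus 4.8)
The plan is to treat the Shapley prior as the prior-side analogue of the GP-SHAP construction: every step of Theorem~\ref{prop: ssv_for_nu_f} goes through verbatim, but with the posterior GP $\cG\cP(\tilm,\tilk)$ replaced by the prior $\cG\cP(0,k)$. Concretely, I would (i) specialise the induced-GP proposition (Stochastic game $\nu_f$ as induced GP) to the prior, obtaining that the payoff process $\nu_f$ is a mean-zero GP indexed by $(\bfx,S)$ with covariance $k_\nu$ built from $k$; (ii) observe that $\phi(\bfx)=\bfA\bfv_\bfx$ is a fixed linear image of this Gaussian payoff vector, so $\phi$ is itself a vector-valued GP; (iii) read off the zero mean; and (iv) rewrite the resulting covariance in the claimed factored form $\cA(\bfx)^\top\cA(\bfx')$.

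For (i)--(iii): applying the induced-GP proposition with $\tilm=0$ and $\tilk=k$ shows that, for any finite collection of query points, the stacked payoffs $\{\nu_f(\bfx,S_j)\}$ are jointly Gaussian with mean zero and cross-covariance $k_\nu\big((\bfx,S_j),(\bfx',S_{j'})\big)=\EE_{X,X'}[k(X,X')\mid X_{S_j}=\bfx_{S_j},X'_{S_{j'}}=\bfx'_{S_{j'}}]$. Since $\phi(\bfx)=\bfA\bfv_\bfx$ by Theorem~\ref{prop: ssv_for_nu_f}, the stacking of $\phi$ over those query points is a single linear transformation of a jointly Gaussian vector, hence jointly Gaussian; this is precisely the statement that $\phi$ is a vector-valued GP. Its mean is $\bfA\,\EE[\bfv_\bfx]=\bfA\mathbf 0=\mathbf 0$, since $m_\nu(\bfx,S)=\EE_X[0\mid X_S=\bfx_S]=0$.

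The substance is step (iv), the covariance identity. Writing the conditional mean embeddings $\mu_{S}(\bfx):=\EE[k(\cdot,X)\mid X_S=\bfx_S]\in\cH_k$, so that $\Psi(\bfx)=[\mu_{S_1}(\bfx),\dots,\mu_{S_{2^d}}(\bfx)]$, I would invoke the reproducing property to get
\begin{align*}
\langle \mu_{S_j}(\bfx),\mu_{S_{j'}}(\bfx')\rangle_{\cH_k}
=\EE_{X,X'}\!\left[\langle k(\cdot,X),k(\cdot,X')\rangle_{\cH_k}\,\middle|\, X_{S_j}=\bfx_{S_j},X'_{S_{j'}}=\bfx'_{S_{j'}}\right]
=k_\nu\big((\bfx,S_j),(\bfx',S_{j'})\big),
\end{align*}
that is, the Gram matrix $\Psi(\bfx)^\top\Psi(\bfx')$ of pairwise $\cH_k$-inner products equals the payoff cross-covariance matrix $\CC[\bfv_\bfx,\bfv_{\bfx'}]$. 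Combining this with $\cA(\bfx)=\Psi(\bfx)\bfA^\top$ and the linear-transformation rule for covariances of $\phi=\bfA\bfv$ yields
\begin{align*}
\cA(\bfx)^\top\cA(\bfx')=\bfA\,\Psi(\bfx)^\top\Psi(\bfx')\,\bfA^\top=\bfA\,\CC[\bfv_\bfx,\bfv_{\bfx'}]\,\bfA^\top=\CC[\phi(\bfx),\phi(\bfx')]=\kappa(\bfx,\bfx'),
\end{align*}
which is the claimed kernel.

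The main obstacle is the measure-theoretic justification for pulling the $\cH_k$-inner product inside the conditional expectation in the displayed identity: this requires each $\mu_{S}(\bfx)$ to be a well-defined Bochner integral in $\cH_k$, which I would secure from the integrable-sample-path assumption of the induced-GP proposition together with a finiteness condition on $k$ (e.g.\ $\EE[\sqrt{k(X,X)}]<\infty$), permitting the interchange of $\langle\cdot,\cdot\rangle_{\cH_k}$ and $\EE$. Everything else is bookkeeping: positive-semidefiniteness of the matrix-valued kernel $\kappa$ is automatic from its factored form $\cA(\bfx)^\top\cA(\bfx')$, and existence/consistency of the vector-valued GP follows by Kolmogorov extension from the symmetric PSD covariance.
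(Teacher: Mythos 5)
Your proposal is correct and follows essentially the same route as the paper's proof: specialise the induced-GP proposition to the prior $\cG\cP(0,k)$, use the linearity $\phi(\bfx)=\bfA\bfv_\bfx$ to preserve Gaussianity and the zero mean, and factor the resulting covariance through inner products of conditional mean embeddings to obtain $\kappa(\bfx,\bfx')=\bfA\Psi(\bfx)^\top\Psi(\bfx')\bfA^\top=\cA(\bfx)^\top\cA(\bfx')$. The only difference is that you spell out the Bochner-integrability and Kolmogorov-extension details that the paper leaves implicit, which is a refinement rather than a different argument.
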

By utilizing the Shapley function prior, we can apply our approach to predict explanations for a wide range of models, including trees, deep neural networks, or RKHS functions, by treating their explanations as noisy samples from this prior and using them as regression labels. Our approach is based on the perspective that there exists a true data generating mechanism $f_*: \cX \to \cY$ and that any model $f$ we use is an approximation of $f_*$. Therefore, any explanations derived from $f$ can be seen as an attempt to reveal the true explanations under $f_*$. This perspective has also been adopted in the work of \citet{chen2020true} and \citet{pmlr-v206-marx23a}. We present the predictive posterior below.

\begin{restatable}[Predictive explanations as multi-output GPs]{proposition}{multioutputgp}
    Given $\bfD_\phi = \{(\bfx_i, \bfphi_i)\}_{i=1}^n = (\bfX, \bfPhi_\bfX)$ where $\bfphi_i\in\RR^d$ are the Shapley values computed under predictive model $f$ and $\bfPhi_\bfX = [\bfphi_1,...,\bfphi_n]^\top$, the predictive explanations for new data $\bfx'$ is distributed as,
\begin{align}
\label{eq: shapley_posterior}
    \phi(\bfx')\mid \bfD_\phi \sim \cN\left(\tilm_\phi(\bfx'), \quad \kappa(\bfx',\bfx') - \kappa(\bfx', \bfX)b_\kappa(\bfx', \bfX)\right)
\end{align}
where $\tilm_\phi(\bfx') = b_\kappa(\bfx', \bfX)^\top\operatorname{vec}(\Phi_\bfX)$, $b_\kappa(\bfx', \bfX) := (\hkappa_{\bfX\bfX} + \sigma_\phi^2 I)^{-1}\kappa(\bfX, \bfx')$, $\hkappa_{\bfX\bfX}$ is the gram matrix for kernel $\kappa$ of size $nd \times nd$, $\kappa(\bfx',\bfX) = [\kappa(\bfx', \bfx_1),\ldots,\kappa(\bfx', \bfx_n)]$ is of size $d \times nd$ and $\sigma^2_\phi$ is the noise parameter for regression.
\end{restatable}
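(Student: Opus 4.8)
The plan is to recognize that Equation~\eqref{eq: shapley_posterior} is simply the posterior of a vector-valued (multi-output) Gaussian process regression, and to derive it by the standard Gaussian conditioning argument applied to the matrix-valued kernel $\kappa$ supplied by the preceding Shapley-prior proposition. The only modelling ingredient beyond that proposition is the observation model: I would treat the precomputed labels as noisy draws from the prior, $\bfphi_i = \phi(\bfx_i) + \bm\epsilon_i$ with $\bm\epsilon_i \overset{iid}{\sim} \cN(0, \sigma_\phi^2 I_d)$, which is exactly the ``noisy samples from this prior'' interpretation stated in the text.

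First I would vectorize. Since $\phi \sim \cG\cP(0, \kappa)$, any finite collection of evaluations is jointly Gaussian; stacking the $n$ training outputs into $\operatorname{vec}(\bfPhi_\bfX) \in \RR^{nd}$ and appending the test output $\phi(\bfx') \in \RR^d$ yields a jointly zero-mean Gaussian vector whose covariance decomposes into the blocks $\hkappa_{\bfX\bfX} \in \RR^{nd \times nd}$ (the Gram matrix assembled from the $d\times d$ sub-blocks $\kappa(\bfx_i, \bfx_j)$), the cross-block $\kappa(\bfx', \bfX) \in \RR^{d\times nd}$, and $\kappa(\bfx', \bfx') \in \RR^{d \times d}$. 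Adding the independent noise contributes $\sigma_\phi^2 I$ to the training--training block only, giving effective covariance $\hkappa_{\bfX\bfX} + \sigma_\phi^2 I$ there.

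Then I would apply the standard conditioning identity for jointly Gaussian vectors. With $\mathbf{a} = \phi(\bfx')$ and $\mathbf{b} = \operatorname{vec}(\bfPhi_\bfX)$, the conditional $\mathbf{a} \mid \mathbf{b}$ is Gaussian with mean $\kappa(\bfx', \bfX)(\hkappa_{\bfX\bfX} + \sigma_\phi^2 I)^{-1}\operatorname{vec}(\bfPhi_\bfX)$ and covariance $\kappa(\bfx', \bfx') - \kappa(\bfx', \bfX)(\hkappa_{\bfX\bfX} + \sigma_\phi^2 I)^{-1}\kappa(\bfX, \bfx')$. To match the stated form I would substitute $b_\kappa(\bfx', \bfX) = (\hkappa_{\bfX\bfX} + \sigma_\phi^2 I)^{-1}\kappa(\bfX, \bfx')$ and use the symmetry $\kappa(\bfX, \bfx')^\top = \kappa(\bfx', \bfX)$, so that the posterior mean becomes $b_\kappa(\bfx', \bfX)^\top \operatorname{vec}(\bfPhi_\bfX)$ and the posterior covariance becomes $\kappa(\bfx', \bfx') - \kappa(\bfx', \bfX) b_\kappa(\bfx', \bfX)$, precisely Equation~\eqref{eq: shapley_posterior}.

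The computation is routine; the only point demanding care is the bookkeeping of the multi-output block structure --- ensuring $\hkappa_{\bfX\bfX}$ is assembled as $n^2$ blocks of size $d\times d$ rather than a scalar Gram matrix, and that the isotropic noise $\sigma_\phi^2 I$ acts consistently across both the $d$ output coordinates and the $n$ observations under the chosen vectorization convention. This is where an error is most likely to creep in, and I would guard against it by checking dimensions at each block ($nd \times nd$, $d \times nd$, $d \times d$). I note finally that the noise model is a modelling assumption rather than something to be proved: the proposition \emph{posits} that explanations from an arbitrary $f$ are treated as noisy realizations of the Shapley-prior process, so no further justification of Gaussianity of the labels is required.
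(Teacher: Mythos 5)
Your proposal is correct and follows exactly the route the paper takes: the paper's own proof simply appeals to standard vector-valued (multi-output) Gaussian process regression with a matrix-valued kernel, citing \citet{alvarez_kernels_2012}, and your derivation is precisely that standard argument carried out explicitly (noisy-label observation model, joint Gaussianity under the Shapley prior, block conditioning, and substitution of $b_\kappa$). Nothing is missing; you have merely filled in the details the paper delegates to the citation.
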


In fact, we see that the posterior mean $\tilm_\phi(\bfx')$ from $\eqref{eq: shapley_posterior}$ are Shapley values of the following payoff vector $\tilde{\bfv}_{\bfx'}$, computed based on the observed explanations $\bfPhi_\bfX$,
\begin{restatable}[Posterior mean as Shapley values for payoff vector $\tilde{\bfv}_{\bfx'}$]{proposition}{posteriormeanshapley}
The posterior mean $\tilm_\phi(\bfx')$ corresponds to Shapley values for the payoff vector $\tilde{\bfv}_{\bfx'}$, i.e., $\tilm_\phi(\bfx') = \bfA \tilde{\bfv}_{\bfx'}$, where  
 $\tilde{\bfv}_{\bfx'} = \sum_{i=1}^n\Psi(\bfx')^\top \Psi(\bfx_i)\bfA^\top \alpha_i$ and 
 $\alpha_i \in \RR^d$ is the $[i,..., i+(d-1)]$ subvector of $(\hkappa_{\bfX\bfX} + \sigma_\phi^2 I)^{-1}\operatorname{vec}(\bfPhi_\bfX)$.    
\end{restatable}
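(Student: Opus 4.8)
The plan is to prove the identity by directly expanding the posterior mean $\tilm_\phi(\bfx')$, exploiting the explicit factorisation of the matrix-valued kernel $\kappa$ supplied by the Shapley-prior proposition, and then reading off the resulting expression through the weighted-least-squares Shapley map $\bfA$. The whole statement is essentially an algebraic rearrangement: once the $nd$-dimensional regression coefficient vector is partitioned into $n$ blocks of size $d$ and each block $\kappa(\bfx',\bfx_i)$ is written in its factored form, the constant matrix $\bfA$ factors out on the left and what remains is exactly the claimed payoff vector $\tilde{\bfv}_{\bfx'}$.

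First I would introduce the coefficient vector $\beta := (\hkappa_{\bfX\bfX} + \sigma_\phi^2 I)^{-1}\operatorname{vec}(\bfPhi_\bfX)\in\RR^{nd}$, so that, using the symmetry of $(\hkappa_{\bfX\bfX}+\sigma_\phi^2 I)^{-1}$ and of the kernel, the posterior mean reads $\tilm_\phi(\bfx') = b_\kappa(\bfx',\bfX)^\top\operatorname{vec}(\bfPhi_\bfX) = \kappa(\bfx',\bfX)\beta$. Writing $\kappa(\bfx',\bfX) = [\kappa(\bfx',\bfx_1),\ldots,\kappa(\bfx',\bfx_n)]$ as a row of $d\times d$ blocks and partitioning $\beta$ into its $n$ consecutive $d$-dimensional blocks $\alpha_i$ (which match the per-observation block structure of $\hkappa_{\bfX\bfX}$ and of $\operatorname{vec}(\bfPhi_\bfX)$), the block multiplication collapses to $\tilm_\phi(\bfx') = \sum_{i=1}^n \kappa(\bfx',\bfx_i)\,\alpha_i$.

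The key step is then to substitute the factored kernel. From $\kappa(\bfx,\bfx') = \cA(\bfx)^\top\cA(\bfx')$ with $\cA(\bfx)=\Psi(\bfx)\bfA^\top$, a short computation at the level of the $\cH$-valued inner products yields the real-matrix identity $\kappa(\bfx',\bfx_i) = \bfA\,\Psi(\bfx')^\top\Psi(\bfx_i)\,\bfA^\top$, where $[\Psi(\bfx')^\top\Psi(\bfx_i)]_{jj'} = \langle \Psi(\bfx')_j,\Psi(\bfx_i)_{j'}\rangle_\cH$ is the $2^d\times2^d$ Gram matrix of the conditional mean embeddings. Inserting this and pulling the constant matrix $\bfA$ out of the sum gives $\tilm_\phi(\bfx') = \bfA\big(\sum_{i=1}^n \Psi(\bfx')^\top\Psi(\bfx_i)\bfA^\top\alpha_i\big) = \bfA\,\tilde{\bfv}_{\bfx'}$, which is the claimed identity. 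Finally, since $\bfA = (\bfZ^\top\bfW\bfZ)^{-1}\bfZ^\top\bfW$ is precisely the weighted-least-squares map sending a payoff vector to its Shapley values (Theorem~\ref{prop: ssv_for_nu_f}), the vector $\bfA\,\tilde{\bfv}_{\bfx'}$ is by construction the Shapley value vector of the game with payoffs $\tilde{\bfv}_{\bfx'}$.

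The main obstacle is the bookkeeping around the matrix-valued kernel: one must keep track of the fact that $\cA(\bfx)^\top$ acts on an $\cH$-valued row vector, so that $\kappa(\bfx',\bfx_i)$ is a genuine $d\times d$ real matrix whose entries are RKHS inner products, and that the vectorisation convention for $\operatorname{vec}(\bfPhi_\bfX)$ and the block structure of $\hkappa_{\bfX\bfX}$ are aligned so that $\alpha_i$ indeed corresponds to observation $i$. Once these conventions are fixed, establishing the factorisation $\kappa(\bfx',\bfx_i)=\bfA\,\Psi(\bfx')^\top\Psi(\bfx_i)\,\bfA^\top$ is the only non-routine identity, and everything else is linear algebra.
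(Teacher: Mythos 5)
Your proof is correct, and it is in fact the ``brute force'' route that the paper's own proof mentions as an alternative but does not carry out: the paper instead invokes the vector-valued representer theorem to write $\tilm_{\phi}(\bfx') = \sum_{i=1}^n\cA(\bfx')^\top\cA(\bfx_i)\alpha_i$ with abstract coefficients $\alpha_i$, performs the same factorisation $\cA(\bfx')^\top\cA(\bfx_i) = \bfA\,\Psi(\bfx')^\top\Psi(\bfx_i)\,\bfA^\top$, pulls $\bfA$ out of the sum, and only afterwards identifies the $\alpha_i$ with the blocks of $(\hkappa_{\bfX\bfX} + \sigma_\phi^2 I)^{-1}\operatorname{vec}(\bfPhi_\bfX)$, deferring that identification to ``some linear algebra exercises.'' Your version starts instead from the explicit posterior mean formula of the preceding proposition, $\tilm_\phi(\bfx') = \kappa(\bfx',\bfX)\beta$ with $\beta = (\hkappa_{\bfX\bfX}+\sigma_\phi^2 I)^{-1}\operatorname{vec}(\bfPhi_\bfX)$, and obtains the block decomposition $\sum_{i=1}^n\kappa(\bfx',\bfx_i)\alpha_i$ by direct partitioning, so the identification of $\alpha_i$ is immediate by construction rather than a loose end. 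The two arguments share the only non-routine step (the kernel factorisation through $\bfA$); what your route buys is self-containedness and a cleaner accounting of where the $\alpha_i$ come from, while the paper's route buys brevity and situates the result within the standard representer-theorem framework for vector-valued regression. Your care with the block-indexing convention (reading the statement's $[i,\ldots,i+(d-1)]$ as the $i$-th consecutive $d$-dimensional block aligned with $\operatorname{vec}(\bfPhi_\bfX)$) resolves a genuine sloppiness in the statement's notation, and your closing remark that $\bfA$ is the weighted-least-squares Shapley map correctly grounds the claim that $\bfA\,\tilde{\bfv}_{\bfx'}$ is a vector of Shapley values, a point the paper leaves implicit.
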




\section{Illustrations}
\label{sec: illustration}

We demonstrate the proposed approaches through three sets of illustrations. We first conduct an ablation study to examine how predictive and estimation uncertainty captured by our explanation algorithms vary under different configurations. We then move on to discuss various exploratory tools that can assist practitioners in utilizing the stochastic explanations. Finally, we demonstrate the effectiveness of the Shapley prior for the predictive explanations problem. Our code is included in the supplementary material, and we provide implementation details in the appendix. For all experiments, we pick the radial basis function~(RBF) as our covariance kernel $k$ for the GPs.

\subsection{Ablation study on different notions of uncertainties captured} 


We conducted a comparison between GP-SHAP, BayesSHAP, and BayesGP-SHAP to demonstrate the differences between model predictive uncertainty and estimation uncertainty captured in the stochastic Shapley values. For this purpose, we used the California housing dataset~\citep{pace1997sparse} from the StatlLib repository, which includes $20640$ instances and $8$ numerical features, with the goal of predicting the median house value for California districts, expressed in hundreds of thousands of dollars. We trained our GP model using $25\%$ and $100\%$ of the data and calculated local stochastic explanations from GP-SHAP, BayesSHAP, and BayesGP-SHAP using $50\%$ and $100\%$ of coalitions. The results, shown in Figure \ref{fig: ablation_housing}, demonstrate that the magnitude of BayesSHAP uncertainties (green bars) are uniform across features as it is designed to capture the overall estimation performance and not feature-specific uncertainty. In contrast, GP-SHAP (red bars) exhibits varying uncertainties across features due to the propagation of variation from the posterior GP $f$ to the attributions. This allows practitioners to make more granular statements about the uncertainty around specific feature explanations. We also observe that increasing the number of coalitions and training data generally leads to a decrease in both estimation and predictive uncertainties, but the estimation uncertainty drops more significantly. BayesGP-SHAP~(yellow bars) consistently provides a more conservative uncertainty estimation by considering both predictive and estimation uncertainties.

\begin{figure}
    \centering    
    \includegraphics[width=0.85\textwidth]{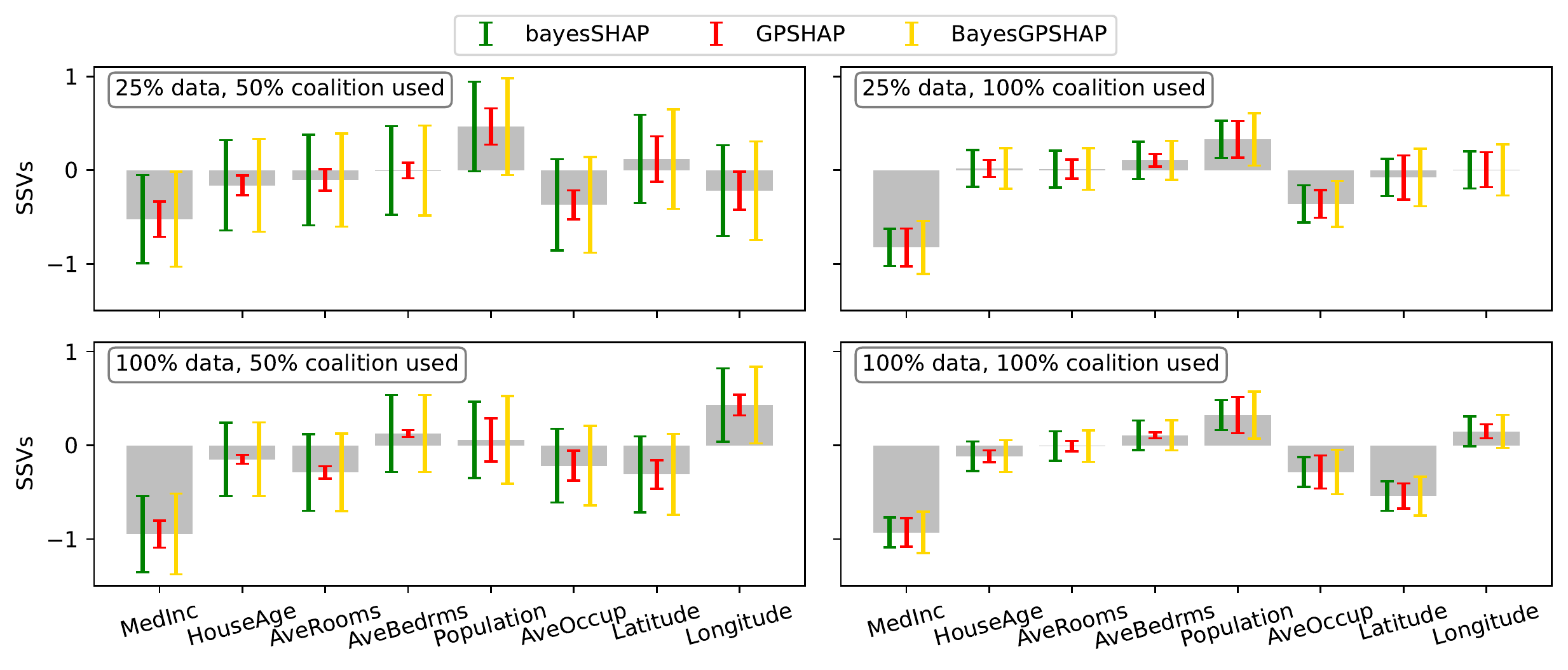}
    \caption{Ablation study on different uncertainties captured by GP-SHAP, BayesSHAP, and BayesGP-SHAP when computing local explanations~(SSVs) using the California housing dataset~\citep{pace1997sparse}. 95\% credible intervals around explanations are shown.}
    \label{fig: ablation_housing}
    \vspace{-1em}
\end{figure}

\subsection{Exploratory analysis of the stochastic explanations}  
We propose several exploratory analysis methods to aid practitioners in comprehending the stochastic explanations generated for their downstream tasks. To this end, we use BayesGP-SHAP to explain a Gaussian process model trained on the breast cancer dataset~\citep{street1993nuclear}. The dataset consists of $569$ patients, $30$ numeric features, and the objective is to predict whether a tumor is malignant or benign.

\textbf{Local explanation. } To visualize local explanations, we can plot the mean and standard deviations of stochastic Shapley values in a bar plot. This approach allows us to not only understand the degree of contribution a feature has to the prediction but also the corresponding credible interval as a measure of explanation uncertainty. Figure~\ref{fig: local_breast_cancer_local_explanation} displays that both ``worst fractal dimension'' and ``worst perimeter'' features have a similar contribution in terms of their absolute mean stochastic Shapley values. However, the model is much more uncertain about the former, allowing the user to calibrate their trust in model explanations. 



\textbf{Heuristic global explanation as means of absolute stochastic Shapley values. } It is common to compute the mean absolute DSVs as a proxy to global feature contributions~\citep{covert2020understanding} based on computed local explanations. Although this heuristic does not result in any explanations that are themselves DSVs of any game at the global level, they provide a quick summary to practitioners about the overall contributions. We can similarly compute the average of the absolute stochastic Shapley values, which are now distributed according to a folded multivariate Gaussian distribution~\citep{kan2017moments} with tractable covariance structure. However, it is important to highlight a key distinction from the previous approach: when computing the \textbf{mean of absolute SSVs}, we consider the uncertainty in the local stochastic explanations, while directly calculating the \textbf{absolute values of mean SSVs} (equivalent to computing absolute DSVs) disregards this uncertainty. The blue and yellow bars in Figure~\ref{fig: global_breast_cancer} depict these two approaches, respectively. Notably, we observe that the blue bars suggest that the "worst fractal dimension" feature is considered more influential than "worst concave points" because it accounts for the higher variability in the former feature's explanation. In contrast, the yellow bar overlooks this variability, leading to a different conclusion.






\textbf{Correlations and dependencies across explanations. } As we have access to the explanation covariance, we can explore their correlation and visualise them as in Figure~\ref{fig: local_breast_cancer_correlation}. For instance, the stochastic Shapley values of ``concavity error'' are less correlated with other features. Moreover, as our explanations are multivariate Gaussians, we can build an undirected Gaussian graphical model~\citep{edwards2012introduction} using the precision matrix~(inverse of the covariance matrix) to study the independence across the local stochastic explanations. We visualise the corresponding graphical model for explanations from patient $1$ in Figure~\ref{fig: local_breast_cancer_graphical} by setting a sparsity threshold of $90\%$, following the approach in \citep{JMLR:v20:17-501}. 


\begin{figure}
    \centering
    \begin{subfigure}[b]{0.45\textwidth}
         \centering
         \includegraphics[width=\textwidth]{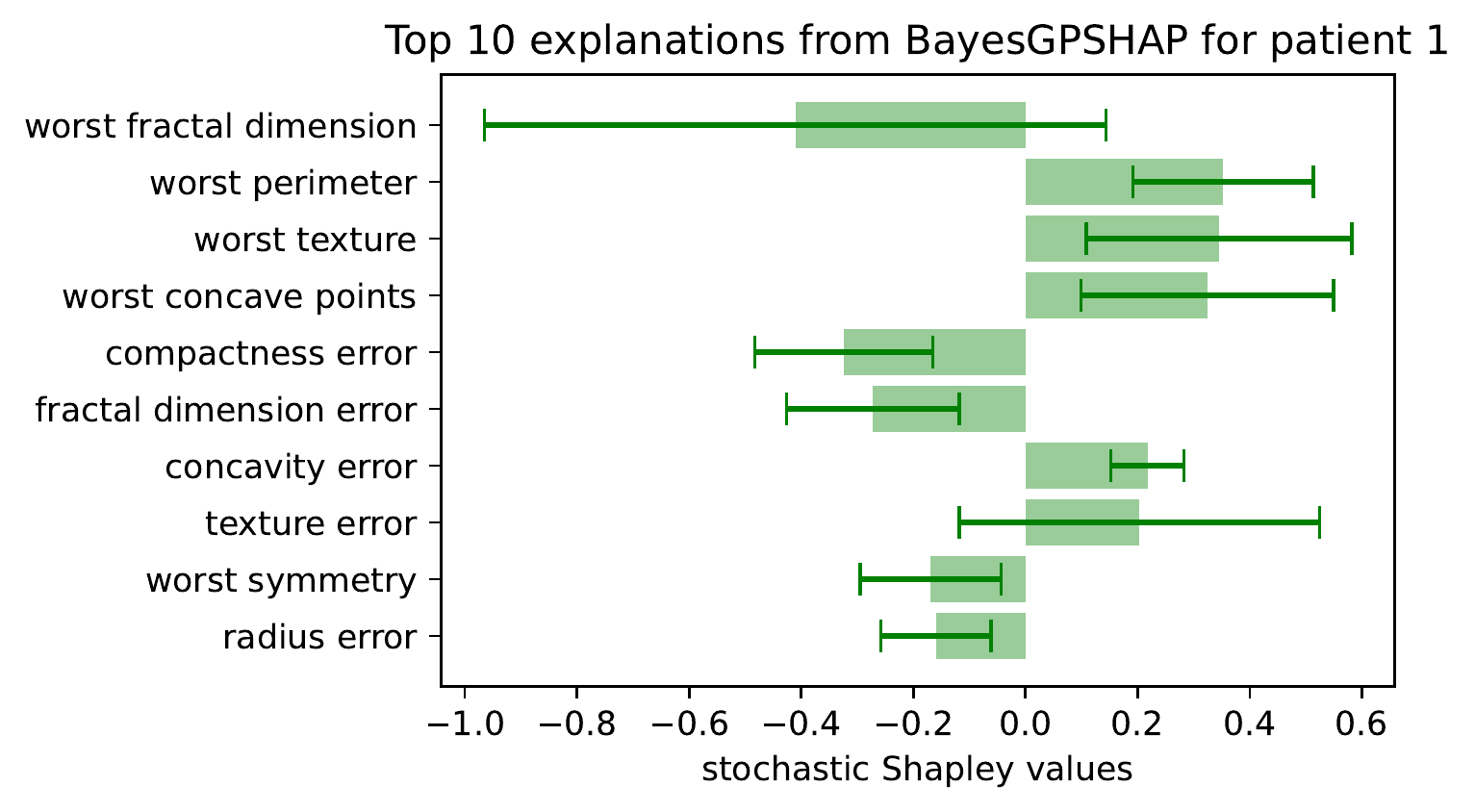}
         \caption{\small{Local explanation}}
         \label{fig: local_breast_cancer_local_explanation}
     \end{subfigure}
     \quad\quad
      \begin{subfigure}[b]{0.45\textwidth}
         \centering
         \includegraphics[width=\textwidth]{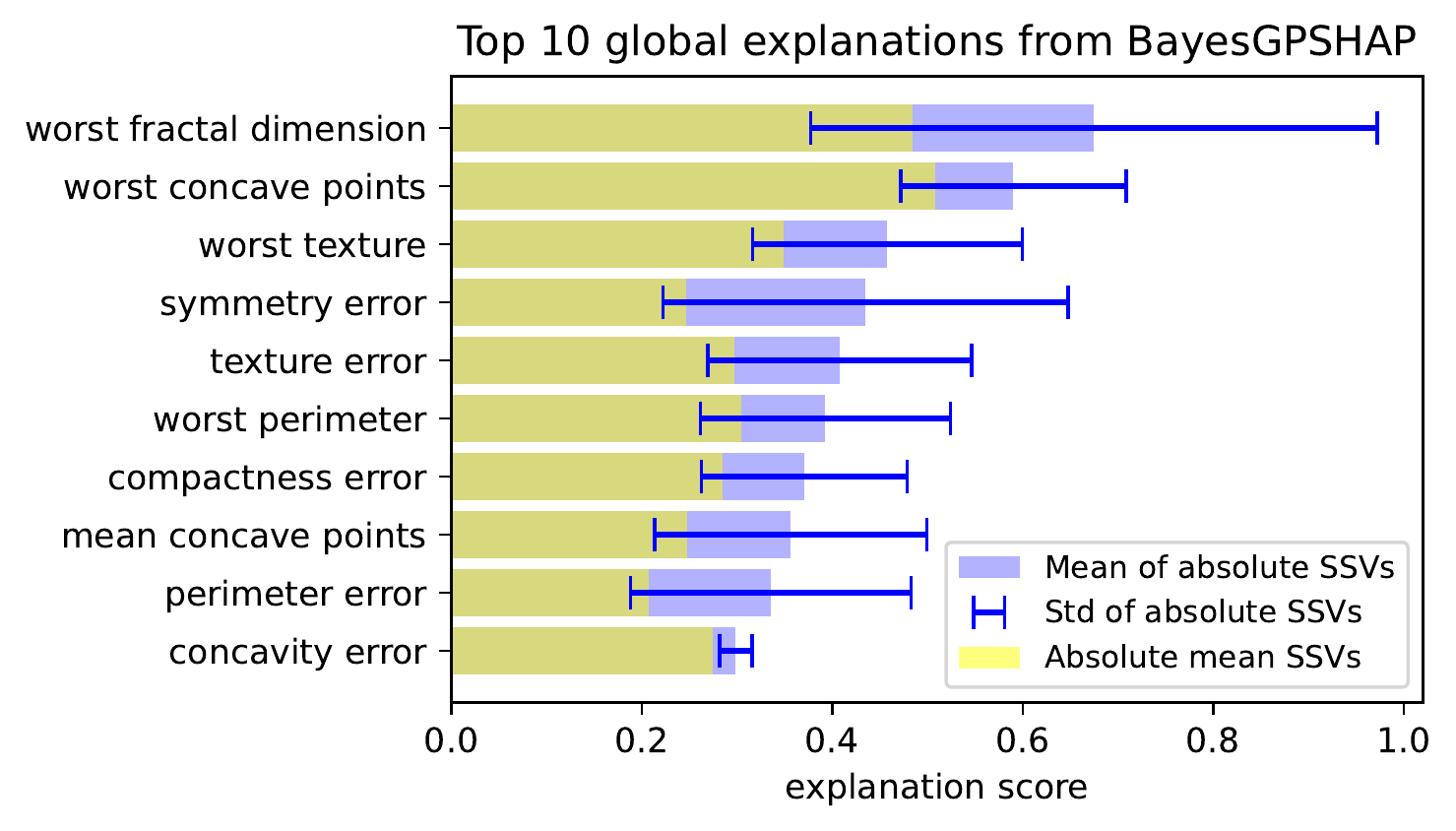}
         \caption{\small{Global explanation}}
         \label{fig: global_breast_cancer}
     \end{subfigure}
    \begin{subfigure}[b]{0.45\textwidth}
         \centering
         \includegraphics[width=\textwidth]{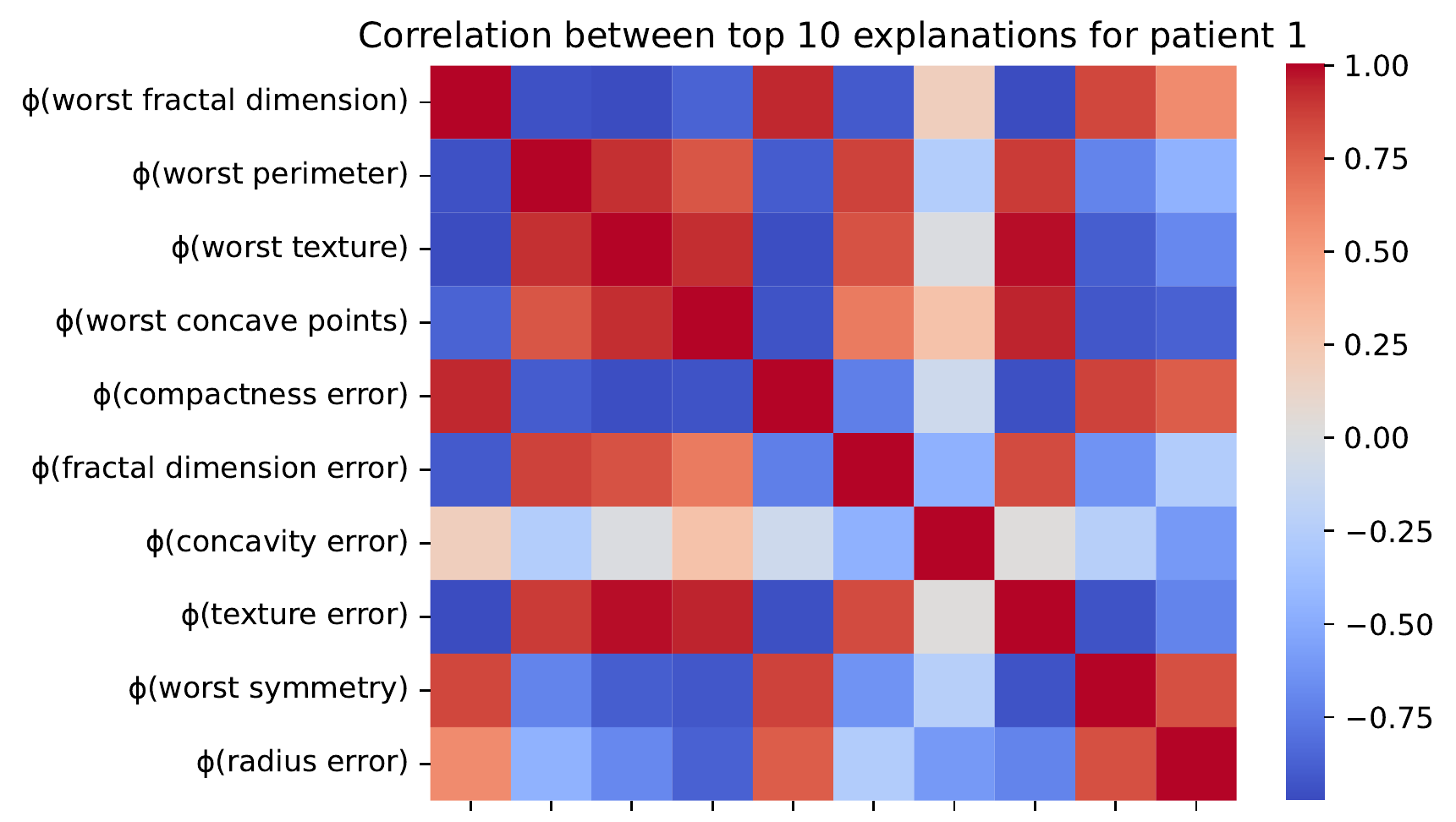}
         \caption{\small{Local explanation correlation}}
         \label{fig: local_breast_cancer_correlation}
     \end{subfigure}
     \quad\quad
    \begin{subfigure}[b]{0.45\textwidth}
         \centering
         \includegraphics[width=\textwidth]{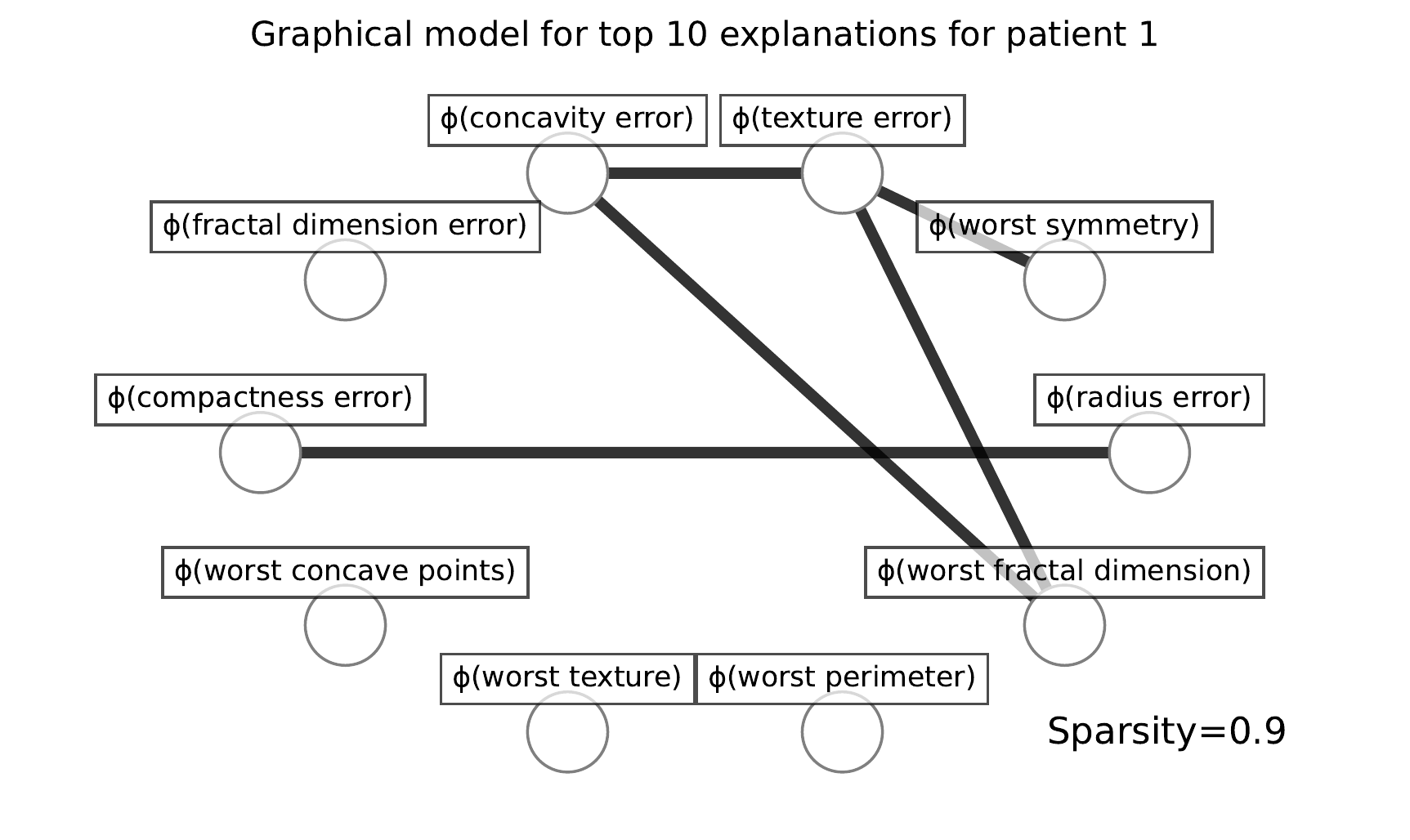}
         \caption{\small{Local explanation graphical model}}
         \label{fig: local_breast_cancer_graphical}
     \end{subfigure}
    \caption{Illustrations of possible exploratory analyses utilising the BayesGP-SHAP covariance structure, applied on the breast cancer dataset.}
    \label{fig: breast_cancer}
    \vspace{-1.5em}
\end{figure}

\subsection{Predictive explanations}

Finally, we showcase the effectiveness of our Shapley prior by comparing the regression performances of a multi-output GP using the Shapley prior with multi-output random forest and neural networks when predicting withheld explanations generated from GP-SHAP, TreeSHAP, and DeepSHAP, respectively.  We use the diabetes dataset~\citep{efron_least_2004} from the UCI repository, which contains $442$ patients with $10$ numerical features and the goal is to predict disease progression. We first train a GP, a random forest, and a neural network, to obtain the subsequent explanations from GP-SHAP, TreeSHAP, and DeepSHAP. Next, we feed $70\%$ of the explanations to our predictive model as training data and the remaining $30\%$ as test data. 
\begin{wrapfigure}[14]{r}{0.35\textwidth}
    \vspace{-1em}
    \centering
    \includegraphics[width=0.35\textwidth]{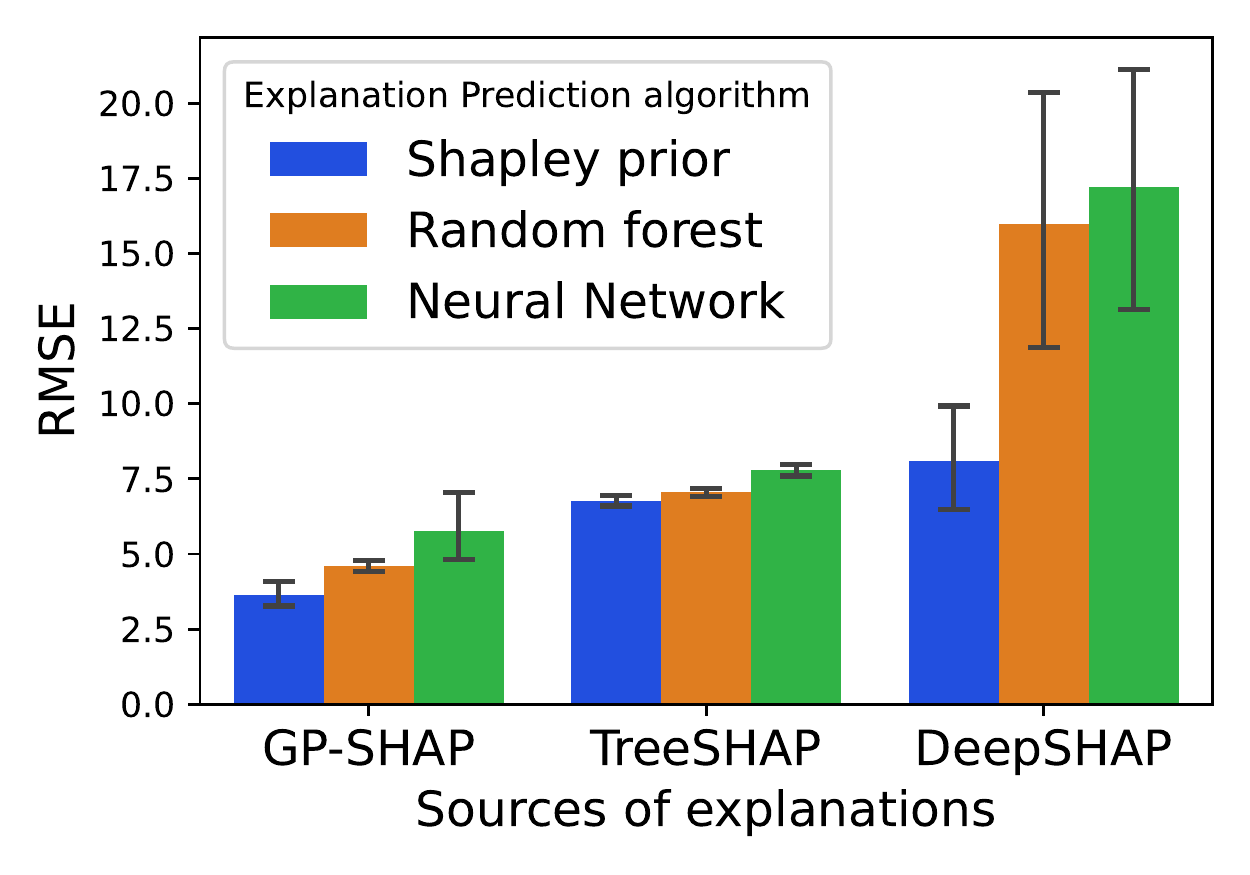}
    \caption{Predictive performance of using Shapley prior to predict explanations generated from different explanation algorithms on the diabetes dataset. }
    \label{fig: predictive_explanation}
\end{wrapfigure}
We repeat this process over $10$ seeds and compute the root mean squared error between the predicted explanation and the exact explanations for respective models, as shown in Figure~\ref{fig: predictive_explanation}. We observe that our GP-model with the Shapley prior consistently outperforms random forest and neural network model for predicting all three types of explanations. This demonstrates that the inductive bias that comes from our choice of covariance structure $\kappa$ allows to build more accurate predictive explanation models. We also observe that the overall average prediction error for explanations generated from GP-SHAP is lower than that of TreeSHAP~\citep{lundberg2018consistent} and DeepSHAP~\citep{lundberg2017unified}, suggesting that GP-SHAP produce explanations that are easier to learn. This follows our intuition as GPs are typically smoother functions than trees and neural networks.




\section{Discussion}
\label{sec: discussion}

In this work, we presented a novel and principled approach for explaining Gaussian process models using stochastic Shapley values. The proposed algorithm GP-SHAP, and its variant BayesGP-SHAP, allow practitioners to consider both predictive and estimation uncertainties when reasoning about the explanations and calibrate their trust in the model accordingly. Furthermore, we consider the setting of predictive explanations, where we introduced a Shapley prior over explanation functions, enabling us to model and predict Shapley values for a wider range of predictive models. We demonstrated the effectiveness of our methods through a number of illustrations and discussed various exploratory tools for practitioners to analyse the stochastic explanations.

\textbf{Limitations and future outlook. } While the general framework of using stochastic Shapley values for stochastic explanations can be applied beyond Gaussian process models, the specific estimation algorithms presented in this paper are tailored to GPs and may not be directly applicable to other probabilistic models like Bayesian neural networks. These alternative models would require different estimation procedures as the resulting explanations would not be GPs anymore. Another promising avenue for future research involves leveraging the uncertainty obtained from predicting explanations and incorporating it into Bayesian optimization for guiding experimental design. For example, this could allow practitioners to explore data regions that produce significant Shapley values for specific features.

\section*{Acknowledgement} 
The authors would like to thank Jean Francois Ton, Shahine Bouabid, Jake Fawkes, and Simon Föll for insightful discussions.

\newpage
\bibliographystyle{unsrtnat}
\bibliography{main_paper.bbl}

\newpage

\section*{\textsc{Supplementary material}: Explaining the uncertain: Stochastic Shapley values for Gaussian process models}
\appendix
\section{The GP-SHAP algorithm and discussion on computation techniques}

We present the complete algorithm for both GP-SHAP and BayesGP-SHAP in Algorithm~\ref{alg: gp-shap}. 
\begin{algorithm}[htb!]
\caption{GP-SHAP / BayesGP-SHAP}
\label{alg: gp-shap}
\begin{algorithmic}[1]
\Require Posterior mean function $\tilm$, posterior covariance function $\tilk$, inducing locations $\tilde{\bfX}$, explanation instances $\bfX$, number of coalition samples $n_Z$, hyperparameter $\lambda, n_0, \sigma_0^2$, base kernel $k$, algorithm \textbf{algo}, 
\State Compute $n_I = $ number of inducing location, $n = $ number of explanation instances, $d = $ number of features.
\State Compute Cholesky decomposition on posterior covariance $\bfL\bfL^\top = \tilde{\bfK}_{\tilde{\bfX}\tilde{\bfX}}$
\State Sample coalitions $\cS = \{S_1, ..., S_{n_Z}\}$ from $[d]$, build binary matrix $\bfZ = \{0, 1\}^{n_Z \times d}$ from $\cS$, and compute weights $W = \operatorname{diag}[w_1,...,w_{n_Z}]$ with $w_i=\frac{d-1}{ {{d}\choose{|S_i|}} |S_i|(d-|S_i|)}$.
\State Compute $\bfA = (\bfZ^\top W \bfZ)^{-1}\bfZ^\top \bfW$ \Comment{Shape: $d \times n_Z$}
\State Compute $\bfB(\bfX, \cS) = [(\bfK_{\tilbfX_S\tilbfX_S}+ \lambda I)^{-1}k_S(\tilbfX_S, \bfX_S)  \quad \text{\textbf{for} $S$ in } \cS]$ \Comment{Shape: $n_Z \times n_I \times n$}
\State Compute $\bfQ$ where $\bfQ_{i,l,k} = \sum_j \bfB(\bfX,\cS)_{i,j,k} \bfL_{j, l}$ \Comment{Shape: $n_Z \times n \times n_I$}
\State Compute $\bfR$ where $\bfR_{i,k,l} = \sum_j \bfA_{i,j}\bfQ_{j,k,l}$ \Comment{Shape: $d \times n \times n_I$}
\State Compute $\bfV$ where $\bfV_{i,m,k,n} = \sum_{j,l}\bfR_{i,j,k}\bfR_{m,l,n}$ \Comment{Shape: $d\times d\times n\times n$}
\State Compute $\bfE$ where $\bfE_{i,k} = \sum_j \bfB(\bfX,\cS)_{i,j,k}\tilm(\tilbfX)_j$ \Comment{Shape: $n_Z \times n$}
\State Compute $\Phi = \bfA\bfE$ \Comment{The mean stochastic Shapley values of shape $d \times n$}
\If{\textbf{algo} = GP-SHAP}
\State \Return mean explanations $\Phi$ and covariance $\bfV$ between $d$ features and $n$ instances
\ElsIf{\textbf{algo} = BayesGP-SHAP}
\State Compute $s^2$ = $\operatorname{diag}\left((\bfE - \bfZ\Phi)^\top \bfW (\bfE - \bfZ\Phi)\right)$ 
$+ \operatorname{diag}(\Phi^\top \Phi)$\Comment{Shape: $n\times 1$}
\State Sample $\sigma^2$ from $\operatorname{Scaled-Inv-}\chi^2\left(n_0 + n_Z, \frac{n_0\sigma_0^2 + n_Z s^2}{n_0 + n_Z}\right)$ \Comment{Shape: $n\times 1$}
\State \Return mean explanations $\Phi$ and covariance $\bfV + (\bfZ^\top\bfW\bfZ)^{-1}\sigma^2$
\EndIf
\end{algorithmic}
\end{algorithm}

\paragraph{Computational considerations.} In terms of computational complexity, one of the most demanding operations in the algorithm is the computation of conditional mean embeddings in step $5$. Instead of naively inverting an $n\times n$ matrix, which would have a computational cost of $\cO(n^3)$, we employ the conjugate gradient method to reduce the computation of the conditional mean embedding component to $\cO(n^2a)$, where $a \ll n$ represents the number of conjugate gradient iterations. Additionally, to further reduce runtime, we utilize the variational sparse GP model~\citep{titsias2010bayesian}. This model learns a set of inducing locations $\tilde{\bfX}$ with a size of $n_I \ll n$, which can be reused for the estimation of conditional mean embeddings in the algorithm. Consequently, the computation of the conditional expectation is reduced from $\cO(n^2a)$ to $\cO(n_I^2a)$. Another computational burden arises from the computation of the full covariance matrix across $d$ features and $n$ instances, which requires storage of a $n^2d^2$ matrix. However, since the full covariance matrix can be factorized into the $\bfR$ component from step $7$ of the algorithm, we can store this low-rank component and compute covariances between specific instances when necessary. It is worth noting that this decomposition of the covariance matrix allows us to avoid redundant computations when computing the covariance component, as we no longer need to iterate over all possible coalitions twice. Finally, we can further speed up our computational by parallelising computation across the sub-sampled coalitions in step $5$.

\section{Proofs and derivations}

\subsection{Section 2 proofs: Stochastic Shapley values}

We include the full proof of the derivation of stochastic Shapley values for completeness. The proof is analogous to the original work of Shapley's~\citep{shapley1953value} but extended to random variable payoffs. \citet{ma2008shapley} has also proved the same theorem but used a different proving strategy. They started with the solution and showed it satisfies the axioms and then prove uniqueness, whereas the following proof starts from the characterisation of s-games and derive the solution from a bottom-up fashion.

To facilitate the proof, we first introduce the concept of stochastic symmetric game.

\begin{proposition}[s-symmetric games]
    Let $C$ be a real-valued random variables, then the symmetric game $\nu_{C, R}(S) := C {\bf1}[R \subseteq S]$ gets a stochastic shapley value as,
    \begin{align}
        \phi_i(\nu_{C,R}) = \frac{C}{r}
    \end{align}
    where $r = |R|$.
\end{proposition}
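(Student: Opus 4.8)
The plan is to recover the value from the three axioms alone, exactly as in Shapley's classical symmetric-game lemma but with every equality read at the random-variable (i.e.\ $\cL_2(\RR)$) level. Throughout, $i \in R$ is the case of genuine interest; players outside $R$ are treated separately at the end.

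First I would check that $R$ is a carrier of $\nu_{C,R}$, so that s-efficiency becomes available with $N = R$. For any $S \subseteq \Omega$ one has $R \subseteq R \cap S$ iff $R \subseteq S$, hence $\nu_{C,R}(R \cap S) = C\,\mathbf{1}[R \subseteq R\cap S] = C\,\mathbf{1}[R \subseteq S] = \nu_{C,R}(S)$, which is exactly the carrier condition.

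Next I would establish symmetry among the members of $R$. Using $\nu_\pi(T) = \nu_{C,R}(\pi^{-1}T)$ together with the bijectivity of $\pi$, one gets $\nu_\pi(T) = C\,\mathbf{1}[R \subseteq \pi^{-1}T] = C\,\mathbf{1}[\pi R \subseteq T]$, so whenever $\pi$ fixes $R$ setwise (i.e.\ $\pi R = R$) the permutation s-game coincides with $\nu_{C,R}$. Taking $\pi$ to be the transposition of two players $i,j \in R$, s-symmetry gives $\phi_j(\nu_{C,R}) = \phi_{\pi i}(\nu_\pi) = \phi_i(\nu_{C,R})$, so all $r = |R|$ players of $R$ carry one common value $\Phi \in \cL_2(\RR)$. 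Then s-efficiency on the carrier $R$ reads $\sum_{i\in R}\phi_i(\nu_{C,R}) = \nu_{C,R}(R) = C$, i.e.\ $r\Phi = C$ as random variables, whence $\phi_i(\nu_{C,R}) = C/r$ for every $i \in R$.

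For completeness, a player $j \notin R$ gets zero: one verifies directly that $R\cup\{j\}$ is again a carrier (since $R \subseteq (R\cup\{j\})\cap S$ iff $R\subseteq S$), while $\nu_{C,R}(R\cup\{j\}) = C = \nu_{C,R}(R)$; applying s-efficiency to both carriers and subtracting yields $\phi_j(\nu_{C,R}) = 0$. The only point requiring care — the mild ``obstacle'' here — is that each axiom must be invoked as an almost-sure equality of $\cL_2(\RR)$-valued payoffs rather than of their means. Since the normalisation $r$ is a deterministic integer, the quotient $C/r$ is unambiguous and remains square-integrable, so the deterministic symmetric-game argument transcribes essentially verbatim to the stochastic setting.
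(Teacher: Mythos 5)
Your proof is correct and follows essentially the same route as the paper's: a permutation fixing $R$ setwise combined with s-symmetry forces all players in $R$ to receive a common value, and s-efficiency on the carrier $R$ then pins that value down to $C/r$. You additionally verify details the paper leaves implicit (that $R$ is indeed a carrier, and that players outside $R$ receive zero via the carrier $R\cup\{j\}$), which strengthens rather than changes the argument.
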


\begin{proof}
    Take any $i, j \in R$, pick a permutation $\pi \in\Pi(U)$ so that $\pi R = R$ and $\pi i=j$, so the induced game $\pi\nu_{C,R} = \nu_{C, R}$, and therefore by the s-symmetry axiom, 
    \begin{align}
        \phi_j(\nu_{C,R}) = \phi_i(\nu_{C,R})
    \end{align}
    Now by the s-efficiency axiom, 
    \begin{align}
        C = \nu_{C,R}(R) = \sum_{j\in R}\phi_j(\nu_{C,R}) = r \phi_i(\nu_{C,R})
    \end{align}
    for any $i\in R$.
\end{proof}

Now we can characterise the form of any stochastic game as follows:
\begin{proposition}
    All s-games with finite carrier can be written as a linear combination of s-symmetric games,
    \begin{align}
        \nu = \sum_{R\subseteq N, R \neq \emptyset} \nu_{c_R(\nu), R}
    \end{align}
    where 
    \begin{align}
        C_R(\nu) = \sum_{T\subseteq R} (-1)^{r-t} \nu(T)
    \end{align}
\end{proposition}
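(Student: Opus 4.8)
The goal is to verify the claimed equality of s-games, that is, to show that for every coalition $S\subseteq\Omega$ the two random variables agree in $\cL_2(\RR)$:
$$\nu(S) = \sum_{\emptyset\neq R\subseteq N} \nu_{C_R(\nu),R}(S) = \sum_{\emptyset\neq R\subseteq N} C_R(\nu)\,\mathbf{1}[R\subseteq S].$$
A useful first observation is that the lift from deterministic to stochastic payoffs is essentially free here: the coefficients $(-1)^{r-t}$ and the indicators are deterministic integers, so both sides are \emph{finite} linear combinations of the payoff random variables $\{\nu(T)\}_{T\subseteq N}$ inside the vector space $\cL_2(\RR)$. Every manipulation below (reindexing and exchanging finite sums) is a linear operation and hence valid at the random-variable level; no measure-theoretic input beyond finiteness of the carrier is required, and the argument mirrors the classical deterministic Möbius-inversion (unanimity-game) decomposition.

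The plan is first to reduce to the case $S\subseteq N$. Since $N$ is a carrier, $\nu(S)=\nu(N\cap S)$; moreover every $R$ appearing in the sum satisfies $R\subseteq N$, so $\mathbf{1}[R\subseteq S]=\mathbf{1}[R\subseteq N\cap S]$ and the right-hand side also depends on $S$ only through $N\cap S$. It therefore suffices to establish the identity for $S\subseteq N$, in which case the surviving terms are exactly those with $R\subseteq S$. Substituting the definition of $C_R(\nu)$ and exchanging the order of the finite summations gives
$$\sum_{\emptyset\neq R\subseteq S} C_R(\nu) = \sum_{T\subseteq S}\nu(T)\sum_{\substack{T\subseteq R\subseteq S\\ R\neq\emptyset}}(-1)^{r-t}.$$
The inner coefficient is evaluated by the substitution $U=R\setminus T\subseteq S\setminus T$ together with the elementary identity $\sum_{U\subseteq A}(-1)^{|U|}=\mathbf{1}[A=\emptyset]$, which forces the coefficient to vanish unless $T=S$ and to equal $1$ when $T=S$. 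This is the standard inclusion-exclusion on the Boolean lattice, now carried out with random-variable entries.

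I expect the only genuinely delicate point to be the bookkeeping around the empty set and the restriction $R\neq\emptyset$. The substitution above is clean whenever $T\neq\emptyset$, since the constraint $R\neq\emptyset$ is then vacuous; but the $T=\emptyset$ term must be treated separately, where excluding $R=\emptyset$ leaves a residual proportional to the normalization $\nu(\emptyset)=\delta_{\nu_0}$. Tracking this term shows that the identity closes exactly under the convention $\nu(\emptyset)=0$ (equivalently $\nu_0=0$): for $S=\emptyset$ both sides reduce to $\nu(\emptyset)$, and for $S\neq\emptyset$ the spurious $T=\emptyset$ contribution vanishes. Collecting the surviving $T=S$ term then yields $\sum_{\emptyset\neq R\subseteq S}C_R(\nu)=\nu(S)$, which, combined with the carrier reduction, gives the result for all $S\subseteq\Omega$. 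The remaining steps are routine and require no further ideas.
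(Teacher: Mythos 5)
Your proof is correct and follows essentially the same route as the paper's: reduce to the case $S \subseteq N$, exchange the two finite sums, and apply the alternating binomial identity $\sum_{U \subseteq A}(-1)^{|U|} = \mathbf{1}[A = \emptyset]$ so that only the $T = S$ term survives. You are in fact slightly more careful than the paper at two points of the same argument --- the explicit carrier reduction handling general $S \subseteq \Omega$, and the bookkeeping of the $R \neq \emptyset$ restriction, which (as you correctly note) makes the stated identity hold exactly under the normalization $\nu(\emptyset) = 0$, a restriction the paper silently elides by dropping $R \neq \emptyset$ in its first display.
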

\begin{proof}
    We start by verifying 
    \begin{align}
        \nu(S) = \sum_{R \subseteq N, R \neq \emptyset}  \nu_{c_R(\nu), R}(S)
    \end{align}
    holds for all $S\subseteq U$, and for any finite carrier $N$ of $\nu$. If $S \subseteq N$, then we can rewrite the expression as,
    \begin{align}
        \nu(S)  &= \sum_{R\subseteq S}\sum_{T\subseteq R} (-1)^{r-t} \nu(T) \\
                &= \sum_{T \subseteq S} \sum_{T\subseteq R \subseteq S} (-1)^{r-t} \nu(T) \\
                &= \sum_{T \subseteq S} \nu(T) \sum_{r=t}^s (-1)^{r-t} {s-t \choose r-t} \\
                &= \nu(S)
    \end{align}
    where in the last equation we used the fact that $\sum_{r=t}^s (-1)^{r-t} {s-t \choose r-t}$ is a binomal expansion of $(1 + (-1))^{s-t}$, therefore the only non-zero expression is when $t=s$.
    
\end{proof}

We can now prove the uniqueness of stochastic Shapley values,

\ssv*
\begin{proof}
    First, let us denote $$\gamma_i(S):=\sum_{\substack{R\subseteq N \\  S\cup \{i\} \subseteq R}} (-1)^{r-s}\frac{1}{r}.$$ Applying the s-linearity axiom on $\phi$ to the characterisation of $\nu$ from the previous propositions leads us to the following,
    \begin{align}
        \phi_i(\nu) 
            &= \phi_i\left(\sum_{R \subseteq N, R\neq \emptyset}  \nu_{C_R(\nu),R}\right) \\
            &= \sum_{R \subseteq N, R\neq \emptyset} \phi_i(\nu_{C_R(\nu),R}) \\
            &= \sum_{R \subseteq N, i \in R} c_R(\nu) \frac{1}{r} \\
            &= \sum_{R \subseteq N, i \in R} \frac{1}{r}\left(\sum_{S\subseteq R} (-1)^{r-s} \nu(S)\right) \\
            &= \sum_{S \subseteq N}\sum_{\substack{R\subseteq N \\  S\cup \{i\} \subseteq R}} (-1)^{r-s}\nu(S) \frac{1}{r} \\
            &= \sum_{S \subseteq N} \gamma_i(S) \nu(S) \\
            \label{eq: explain_sv_proof}
            &= \sum_{\substack{S\subseteq N \\ i \in S}}\gamma_i(S) \nu(S) + \gamma_i(S-\{i\})\nu(S-\{i\}) \\
            &= \sum_{\substack{S\subseteq N \\ i \in S}} \gamma_i(S) \left(\nu(S) - \nu(S-\{i\})\right) \\ 
            &= \sum_{\substack{S\subseteq N \\ i \in S}} \frac{(s-1)!(n-s)!}{n!}  \left(\nu\left(S\right) - \nu\left(S-\{i\}\right)\right) \\
            &= \sum_{S\subseteq N\backslash\{i\}} c_{|S|} \left(\nu(S\cup i) -\nu(S) \right)
    \end{align}
    where in \eqref{eq: explain_sv_proof} we used the following observation: given $i\notin S'\subseteq N$, and $S = S' \cup \{i\}$, then $\gamma_i(S) = -\gamma_i(S')$. 

It satisfies uniqueness by construction.
\end{proof}

\lemmameanandvar*
\begin{proof}
    The equivalence between mean of stochastic Shapley values and deterministic Shapley values of mean game is trivial to show leveraging the linearity of expectation. The variance of $\VV[\phi_i(\nu)]$ can be shown by repeatedly applying the standard identity $\VV[X + Y] = \VV[X] + \VV[Y] + 2\CC[X, Y]$ for random variables $X, Y$. Now consider the deterministic Shapley values of variance game $\VV[\nu]$,
    \begin{align}
        \bar{\phi_i}[\VV[\nu(\cdot)]] = \sum_{S\subseteq N\backslash \{i\}} c_{|S|}\left(\VV[\nu(S\cup i)] - \VV[\nu(S)]\right)
    \end{align}
    Comparing to the expression of $\VV[\phi_i(\nu)]$ from the lemma, 
    \begin{equation*}
    \begin{split}
        \VV[\phi_i(\nu)] = \sum_{S\subseteq N\backslash\{i\}}\sum_{S'\subseteq N\backslash\{i\}} c_{|S|}c_{|S'|}\big(&\CC[\nu_{S\cup i}, \nu_{S'\cup i}] - \CC[\nu_{S\cup i}, \nu_{S'}] - \CC[\nu_{S}, \nu_{S'\cup i}] + \CC[\nu_{S}, \nu_{S'}]\big),
    \end{split}
    \end{equation*}    
    even if we assume mutual independence across all payoff random variables, leading to $\CC[\nu(S\cup i), \nu(S)] = 0$ for all $S$, we still would not subtract but instead sum the variance of $\VV[\nu(S\cup i)]$ and $\VV[\nu(S)]$. Therefore the variances of stochastic Shapley values is not the same as the deterministic Shapley values of the variance game.
\end{proof}

\subsection{Section 3.1 proofs on the stochastic Shapley values for induced stochastic game from GP}

\stochasticgamegp*
\begin{proof}
    This is a direct application of \citet[Proposition 3.2]{chau_deconditional_2021} to the distribution $P(X \mid X_S=\bfx_S)$.
\end{proof}

\stochasticsvform*
\begin{proof}
    Recall from \citet[Theorem 2]{lundberg2017unified}, for deterministic Shapley values, given a deterministic payoff $\bar{
    \bfv}_\bfx$ for all $2^d$ coalitions, the expression of Shapley values for each $i \in [d]$,
    \begin{align}
        \bar{{\phi}_{\bfx}}_{i} = \sum_{S\subseteq [d]\backslash\{i\}} c_{|S|} \left(\bar{\nu}_f(S\cup i) - \bar{\nu}_f(S)\right)
    \end{align}
    can be written compactly as the following vector,
    \begin{align}
        \bar{\phi}_\bfx = \bfA \bar{\bfv}_\bfx .
    \end{align}
    We can therefore similarly write down the form of the stochastic Shapley values using this linear operator $\bfA$, acting now on a vector of random variable output stochastic payoff vector $\bfv_\bfx$,
    \begin{align}
        \phi_{\bfx} = \bfA \bfv_\bfx .
    \end{align}
    Nonetheless, as Proposition~\ref{prop: cmp} implies that $\bfv_\bfx$ is a multivariate Gaussian, therefore $\phi_\bfx$ is also multivariate Gaussian with mean and covariance the following,
    \begin{align}
        \bfv_\bfx \sim \cN\left(A \EE[\bfv_\bfx], A \VV[\bfv_\bfx] A^\top \right) .
    \end{align}
\end{proof}

\subsection{Section 3.2 proofs on estimation}

To proceed, we first introduce the concepts of conditional mean embedding as a tool to estimate conditional expectation of functions living in their corresponding RKHSs, 

\begin{defn}[Conditional mean embedding~\citep{muandet2016kernel}] Let $X, Y$ be random variables and $k:\cX\to\cX\to\RR$ a kernel on $X$, then we define the following as the conditional mean embedding of $p(X\mid Y=y)$,
\begin{align}
    \mu_{X\mid Y=y}:=\int k(\cdot, X)d\PP(X\mid Y=y)
\end{align}
\end{defn}

\begin{proposition}[Conditional Mean estimation] For random variable $X, Y$, and a kernel $k:\cX\to\cX\to\RR$ on $\cX$ and 
a kernel $l:\cY\to\cY\to\RR$ on $\cY$. Given observations $\bfD=\{\bfX, \bfy\}$, the empirical conditional mean embedding can be estimated as
\begin{align}
    \hat{\mu}_{X\mid Y=y} = l(y, \bfy)\left(\bfL_{\bfy\bfy} + \lambda I\right)^{-1}k(\bfX, \cdot), 
\end{align}
where $l(y, \bfy) = [l(y, y_1),\ldots, l(y, y_n)]^\top$ and $k(\cdot, \bfX) = [k(\cdot, \bfx_1),\ldots, k(\cdot, \bfx_n)]^\top$, the parameter $\lambda > 0$ is there to stablise the inversion. Now for $f\in\cH_k$, the conditional expectation can then be estimated as,
\begin{align}
    \hat{\EE}[f(X) \mid Y=y] &= \langle  \hat{\mu}_{X\mid Y=y}, f \rangle \\
    &= l(y, \bfy)(\bfL_{\bfy\bfy} + \lambda I)^{-1} \bff ,
\end{align}
where $\bff = [f(\bfx_1), \ldots, f(\bfx_n)]^\top$.
\end{proposition}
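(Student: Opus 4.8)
The plan is to obtain the empirical embedding from the conditional mean operator together with regularised empirical covariance operators, and then read off the conditional expectation via the reproducing property. I may assume the operator characterisation of the population embedding from \citet{muandet2016kernel}: writing $k(\cdot,x)\in\cH_k$ and $l(\cdot,y)\in\cH_l$ for the canonical feature maps, one has $\mu_{X\mid Y=y}=\cC_{X\mid Y}\,l(\cdot,y)$, where $\cC_{X\mid Y}=C_{XY}C_{YY}^{-1}$ is the conditional mean operator formed from the cross-covariance operator $C_{XY}=\EE[k(\cdot,X)\otimes l(\cdot,Y)]$ and the covariance operator $C_{YY}=\EE[l(\cdot,Y)\otimes l(\cdot,Y)]$. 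This reduces the task to estimating $\cC_{X\mid Y}$ from the sample $\bfD=\{\bfX,\bfy\}$.

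First I would introduce the feature operators $\Xi=[k(\cdot,\bfx_1),\dots,k(\cdot,\bfx_n)]:\RR^n\to\cH_k$ and $\Upsilon=[l(\cdot,y_1),\dots,l(\cdot,y_n)]:\RR^n\to\cH_l$, so that the regularised empirical covariance operators are $\hat C_{XY}=\tfrac1n\,\Xi\Upsilon^\top$ and $\hat C_{YY}+\lambda I=\tfrac1n\,\Upsilon\Upsilon^\top+\lambda I$. Substituting these into the operator characterisation gives the plug-in estimator $\hat{\cC}_{X\mid Y}=\Xi\Upsilon^\top(\Upsilon\Upsilon^\top+n\lambda I)^{-1}$. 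The key algebraic step is then the push-through identity $\Upsilon^\top(\Upsilon\Upsilon^\top+n\lambda I)^{-1}=(\Upsilon^\top\Upsilon+n\lambda I)^{-1}\Upsilon^\top=(\bfL_{\bfy\bfy}+n\lambda I)^{-1}\Upsilon^\top$, where I use $\Upsilon^\top\Upsilon=\bfL_{\bfy\bfy}$.

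Applying $\hat{\cC}_{X\mid Y}$ to $l(\cdot,y)$ and noting that $\Upsilon^\top l(\cdot,y)=[l(y_i,y)]_{i=1}^n=l(\bfy,y)$ while $\Xi$ acts as $\beta\mapsto\sum_i\beta_i k(\cdot,\bfx_i)$, I obtain $\hat\mu_{X\mid Y=y}=k(\bfX,\cdot)^\top(\bfL_{\bfy\bfy}+\lambda I)^{-1}l(\bfy,y)$ after absorbing the factor $n$ into the regulariser $\lambda$; by symmetry of $l$ and $\bfL_{\bfy\bfy}$ this is exactly the stated form. For the conditional expectation I would invoke the reproducing property to interchange expectation and inner product: for $f\in\cH_k$, $\EE[f(X)\mid Y=y]=\EE[\langle f,k(\cdot,X)\rangle\mid Y=y]=\langle f,\mu_{X\mid Y=y}\rangle$, the interchange being justified by Bochner integrability of $X\mapsto k(\cdot,X)$. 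Substituting the empirical embedding and using $\langle f,k(\cdot,\bfx_i)\rangle=f(\bfx_i)$ collapses the inner product to $\langle f,\hat\mu_{X\mid Y=y}\rangle=l(y,\bfy)(\bfL_{\bfy\bfy}+\lambda I)^{-1}\bff$ with $\bff=[f(\bfx_1),\dots,f(\bfx_n)]^\top$, as claimed.

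I expect the main obstacle to be the rigorous justification of the operator algebra rather than the bookkeeping: $C_{YY}^{-1}$ need not exist on an infinite-dimensional $\cH_l$, so the population formula must be read as a regularised limit, and the push-through identity must be verified at the operator level (both sides coincide after left- and right-multiplying by the positive-definite operators $\Upsilon\Upsilon^\top+n\lambda I$ and $\Upsilon^\top\Upsilon+n\lambda I$, which are boundedly invertible for $\lambda>0$). An alternative route that sidesteps inverting $C_{YY}$ altogether is the regularised vector-valued regression view, in which $\hat\mu_{X\mid Y=\cdot}$ is the representer-theorem solution of a ridge problem over $\cH_l\to\cH_k$ operators; this yields the same closed form with no covariance inverse invoked, and may be the cleaner presentation. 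The remaining steps are routine inner-product computations.
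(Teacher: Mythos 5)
Your derivation is correct, but the comparison here is somewhat degenerate: the paper offers no proof of this proposition at all — its ``proof'' is a one-line deferral to \citet{song2013kernel,muandet2016kernel}. What you have written is essentially the derivation contained in those references, made self-contained: the operator characterisation $\mu_{X\mid Y=y}=C_{XY}C_{YY}^{-1}l(\cdot,y)$, the plug-in with regularised empirical operators $\hat C_{XY}=\tfrac1n\Xi\Upsilon^\top$ and $\hat C_{YY}+\lambda I$, the push-through identity $\Upsilon^\top(\Upsilon\Upsilon^\top+n\lambda I)^{-1}=(\Upsilon^\top\Upsilon+n\lambda I)^{-1}\Upsilon^\top$ with $\Upsilon^\top\Upsilon=\bfL_{\bfy\bfy}$, and the reproducing-property step $\EE[f(X)\mid Y=y]=\langle f,\mu_{X\mid Y=y}\rangle_{\cH_k}$ with the interchange of expectation and inner product justified by Bochner integrability. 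All of these steps check out: both regularised operators are boundedly invertible for $\lambda>0$, the symmetry of $\bfL_{\bfy\bfy}+\lambda I$ reconciles your column-vector expression with the row-vector form in the statement, and absorbing the factor $n$ into $\lambda$ is harmless since the proposition treats $\lambda$ as a free stabilising hyperparameter (this is also how the paper uses it downstream, in Proposition 6). Your closing caveats are precisely the delicate points that the cited literature is careful about — $C_{YY}^{-1}$ need not exist on an infinite-dimensional $\cH_l$, so the population formula must be read under range assumptions or as a regularised limit, and the vector-valued ridge-regression view yields the same closed form without inverting any covariance operator. So relative to the paper you have added content rather than diverged from it: the paper buys brevity by citing, while your version is self-contained, makes the role of the regulariser explicit, and flags exactly where rigour is needed.
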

\begin{proof}
    This is standard result from literature, please read \citet{song2013kernel,muandet2016kernel} for more details.
\end{proof}
Now we can apply these propositions to estimate the mean and covariance functions of the induced stochastic game from GP,
\estimationnu*
\begin{proof}
    Without loss of generality, we will demonstrate this proposition with $\tilm, \tilk$ obtained via standard GP regression, i.e.,
    \begin{align}
        \tilm(\bfx) &= k(\bfx,\bfX)(\bfK_{\bfX\bfX} + \sigma^2 I)^{-1}\bfy \\
        \tilk(\bfx, \bfx') &= k(\bfx,\bfx') - k(\bfx, \bfX)(\bfK_{\bfX\bfX} + \sigma^2)^{-1} k(\bfX, \bfx') .
    \end{align}
    Starting with the mean function,
    \begin{align}
        \EE[\tilm(X)\mid X_S = \bfx_S] 
        &= \EE_X[k(X,\bfX)(\bfK_{\bfX\bfX} + \sigma^2 I)^{-1}\bfy \mid X_S=\bfx_S] \\
        &= \langle k(\cdot, \bfX)(\bfK_{\bfX\bfX} + \sigma^2 I)^{-1}\bfy, \mu_{X\mid X_S=\bfx_S}\rangle_{\cH_k} .
    \end{align}
    We can replace the population conditional mean embedding with the empirical version, and expand,
    \begin{align}
        \hat{\EE}[\tilm(X)\mid X_S = \bfx_S] &= \langle k(\cdot, \bfX)(\bfK_{\bfX\bfX} + \sigma^2 I)^{-1}\bfy, \hat{\mu}_{X\mid X_S=\bfx_S}\rangle_{\cH_k} \\
        &= k_S(\bfX_S, \bfx_S)(\bfK_{\bfX_S\bfX_S} + \lambda I)^{-1}\bfK_{\bfX\bfX}(\bfK_{\bfX\bfX} + \sigma^2 I)^{-1}\bfy \\
        &= \bfb(\bfx, S)^\top \tilm(\bfX) .
    \end{align}
    Analogously, the conditional expectation of the posterior covariance function, i.e., $\EE[\tilk(X, X')\mid X_S=\bfx_S, X'_S=\bfx'_S]$, can be estimated following the steps above,
    \begin{align}
         \mu_{X\mid X_S=\bfx_S}^\top\mu_{X'\mid X'_S=\bfx'_S} - \mu_{X\mid X_S=\bfx_S}^\top k(\cdot,\bfX)(\bfK_{\bfX\bfX} + \sigma^2 I)^{-1} k(\bfX, \cdot)\mu_{X'\mid X'_S=\bfx'_S}.
    \end{align}
    After replacing the population conditional mean embedding as their empirical estimates, we can arrive at the solution.
\end{proof}

\gpshap*
\begin{proof}
    The result follows directly from the previous proposition. Recall $\phi(\hat{\nu}_f(\bfx, \cdot)) = \bfA \hat{\bfv}_\bfx$ for $\hat{\bfv}_\bfx$ the vector of stochastic payoffs for each coalition. To estimate the mean, we 
    \begin{align}
        \EE[\phi(\hat{\nu}_f(\bfx, \cdot))] 
        &= \bfA \EE[\hat{\bfv}_\bfx] \\
        &= \bfA\begin{bmatrix}
            \hat{m}_\nu(\bfx, S_1) \\
            \vdots\\
            \hat{m}_\nu(\bfx, S_{2^d})
                \end{bmatrix} \\
        &= \bfA\begin{bmatrix}
            \bfb(\bfx, S_1)^\top \tilm(\bfX) \\
            \vdots\\
            \bfb(\bfx, S_{2^d})^\top \tilm(\bfX)
            \end{bmatrix} \\
        &= \bfA \bfB(\bfx, [d])^\top \tilm(\bfX).
    \end{align}
    Recall $\VV[\bfv_\bfx]_{i, j} = \hat{k}_\nu((\bfx, S_i), (\bfx, S_j)) = \bfb(\bfx, S_i)^\top \tilde{\bfK}_{\bfX\bfX}\bfb(\bfx, S_j)$, the derivation for the covariance matrix then follows analogously as the derivation for the mean,
    \begin{align}
        \VV[\phi(\hat{\nu}_f(\bfx, \cdot))] 
        &= \bfA\VV[\hat{\bfv}_\bfx]\bfA^\top \\
        &= \bfA\left[\bfb(\bfx, S_i)^\top \tilde{\bfK}_{\bfX\bfX}\bfb(\bfx, S_j)\right]_{i=1, j=1}^{2^d, 2^d}\bfA^\top \\
        &= \bfA\bfB(\bfx, [d])^\top \tilde{\bfK}_{\bfX\bfX}\bfB(\bfx, [d])\bfA^\top .
    \end{align}
\end{proof}

\bayesshap*
\begin{proof}
    See \citet[Section. 3.1]{slack_reliable_2021}.
\end{proof}

\bayesgpshap*
\begin{proof}
    We drop the bar notation of $\bar{\phi}$ to unify notations. Given the posterior GP $f\mid \bfD \sim  \cG\cP(\tilm, \tilk)$
    \begin{align}
        p(\phi \mid \sigma^2, \bfZ_\ell, \bfx, \bfD) &= \int p(\phi \mid \sigma^2, \bfZ_\ell, f, \bfx, \bfD) p(f\mid \bfD) df
    \end{align}
    Using a standard Gaussian conjugacy procedure, we can derive the variance as the sum of variances from GP-SHAP and BayesSHAP. While it is possible to integrate $p(\sigma^2 \mid \bfZ_\ell, f, \bfx, \bfD)$ with respect to the posterior, this leads to a complex scaled mixture of normals that is difficult to model. Instead, we construct a scaled inverse chi-square distribution with $s^2[\EE[\bfb_\bfx]]$, which represents the error of the weighted regression with respect to the mean payoffs $\EE[\bfv_\bfx]$. We sample $\sigma^2$ from the following distribution:
    \begin{align}
    \sigma^2\mid \bfZ_\ell, \bfx,\bfD \sim \operatorname{Scale-Inv-}\chi^2\left(\ell_0 + \ell, \frac{\ell_0\sigma_0^2 + \ell s^2(\EE[\bfv_\bfx])}{\ell_0 + \ell}\right).
    \end{align}

    
\end{proof}

\subsection{Proofs for section 4 on predictive explanation and Shapley prior}
\shapleyprior*
\begin{proof}
    The proof is similar to how we proved previous propositions but applied to prior GP $f \sim \cG\cP(0, k)$ instead. If we set,
    \begin{align}
        \nu_f(\bfx, S) = \EE[f(X) \mid X_S=\bfx_S] ,
    \end{align}
    then $\nu_f$ is a GP on the joint space of data and coalitions with mean $0$, and covariance function,
    \begin{align}
        \operatorname{cov}\left(\nu_f(\bfx, S), \nu_f(\bfx', S')\right) &= \EE[k(X, X')\mid X_S=\bfx_S, X'_{S'} = \bfx'_{S'}] \\
        &= \mu_{X\mid X_S=\bfx_S}^\top \mu_{X\mid X_{S'}=\bfx'_{S'}} .
    \end{align}
    Since $\phi = \bfA \bfv_\bfx$ for $\bfv_\bfx$ the vector of stochastic payoff from the game induced by the GP prior, the mean stays $0$, and the covariance is,
    \begin{align}
        \kappa(\bfx, \bfx') &= \bfA\left[\mu_{X\mid X_{S_i}=\bfx_{S_i}}^\top \mu_{X\mid X_{S_j}=\bfx'_{S_j}}\right]_{i=1,j=1}^{2^d, 2^d}\bfA^\top \\
        &= \bfA \Psi(\bfx)^\top \Psi(\bfx') \bfA^\top \\
        &= \cA(\bfx)^\top \cA(\bfx'),
    \end{align}
    therefore we have a matrix-valued covariance kernel $\kappa$ to build a prior over the induced Shapley values.
\end{proof}

\multioutputgp*
\begin{proof}
    Follows from standard vector-valued Gaussian process regression results. See \citet{alvarez_kernels_2012} for a detailed discussion on regression with matrix-valued kernels.
\end{proof}

\posteriormeanshapley*
\begin{proof}
    There are two ways to see this. First is by brute force and rearranging the terms in the posterior mean expression. The other is to leverage the vector-valued representer theorem~\citep{scholkopf2001generalized} and write the posterior mean as,
    \begin{align}
        \tilm_{\phi}(\bfx') &= \sum_{i=1}^n\cA(\bfx')^\top\cA(\bfx_i)\alpha_i, \quad \alpha_i \in \RR^d \\
        &= \sum_{i=1}^n \bfA\Psi(\bfx')^\top \Psi(\bfx_i) \bfA^\top \alpha_i \\
        &= \bfA \left(\sum_{i=1}^n \Psi(\bfx')^\top \Psi(\bfx_i) \bfA^\top \alpha_i\right) \\
        &= \bfA \tilde{\bfv}_{\bfx'}
    \end{align}
    after some linear algebra exercises, we can see that $\alpha_i$ is the $[i:i+(d-1)]$ sub-vector of $(\hkappa_{\bfX\bfX} + \sigma_\phi^2 I)^{-1}\operatorname{vec}(\bfPhi_\bfX)$    
\end{proof}

\section{Implementation details and further illustrations.}

All illustrations are run locally on a MacbookPro 2021 with Apple M1 pro chip.

\subsection{Ablation study on different notions of uncertainties captured}

To demonstrate the difference between the uncertainties captured by GP-SHAP, BayesSHAP, and BayesGP-SHAP, we utilise the California housing dataset~\citep{pace1997sparse}. This dataset was derived from the 1990 U.S. census, each observation represent a census block group. A block group is the smallest geographical unit for which the U.S. Census Bureau publishes sample data (a block group typically has a population of 600 to 3,000 people). The dataset includes $20640$ instances with $8$ numerical features measuring the following:
\begin{itemize}
    \item \textbf{MedInc:} Median income in block group
    \item \textbf{HouseAge:} Median house age in block group
    \item \textbf{AveRooms:} Average number of rooms per household
    \item \textbf{AveBedrms:} Average number of bedrooms per household
    \item \textbf{Population:} Block group population
    \item \textbf{AveOccup:} Average number of houehold members
    \item \textbf{Latitude:} Block group latitude
    \item \textbf{Longitude:} Block group longitude
\end{itemize}
The target variable is the median house value for California districts, expressed in hundreds of thousands of dollars. In the following, we train a GP model and extract explanations using GP-SHAP, BayesSHAP, and BayesGP-SHAP, for $4$ different configurations:
\begin{enumerate}
    \item trained on $25\%$ of data, estimate the Shapley values using $50\%$ of coalitions.
    \item trained on $25\%$ of data, estimate the Shapley values using $100\%$ of coalitions.
    \item trained on $100\%$ of data, estimate the Shapley values using $50\%$ of coalitions.
    \item trained on $100\%$ of data, estimate the Shapley values using $100\%$ of coalitions.
\end{enumerate}

To fit the GP model, we employ a sparse Variational GP approach with $200$ learnable inducing point locations. The evidence lower bound is optimized using batch gradient descent with a batch size of $64$, a learning rate of $0.01$, and $100$ iterations. The RBF kernel with learnable bandwidths initialized using the median heuristic approach is used for the sparse GP. The inducing locations are initialized using a standard clustering approach to obtain a representative set of inducing points.

After training the model, we reuse the learned inducing points and kernel bandwidths for the explanation algorithms. The explanations are obtained using the procedure described in Algorithm~\ref{alg: gp-shap} of our work.

In Figure~\ref{fig: ablation_housing} of our paper, we present the stochastic Shapley values for the 11th observation, computed using the three explanation algorithms. The plot includes the 95\% credible interval to visualize the uncertainties associated with the explanations.

\paragraph{Further illustration:} In Figure~\ref{fig: housing_granular}, we plot the beeswarm plot on the mean and standard deviation of each stochastic explanations respectively. We color the point based on the relative size of the feature value compared to the rest. We see that in Figure~\ref{fig: housing_mean_sv}, which plotted the mean stochastic shapley values for each observation, the relationship between most features' explanation to the target variable is quite linear. For example, the higher the median income (\textbf{MedInc}), the more positive those feature contribute to predicting the respective median house value. On the other hand, Figure~\ref{fig: housing_std_sv} illustrated the standard deviation of each stochastic explanations. In general, we see that the larger the feature values are, the more uncertain the explanation becomes. Nonetheless, we see that the feature contributing the most, defined as the feature having largest distance spanned by their most positive and most negative mean stochastic Shapley values, does not necessarily have the largest variation respectively.

\begin{figure}
    \centering
    \begin{subfigure}[b]{0.47\textwidth}
         \centering
         \includegraphics[width=\textwidth]{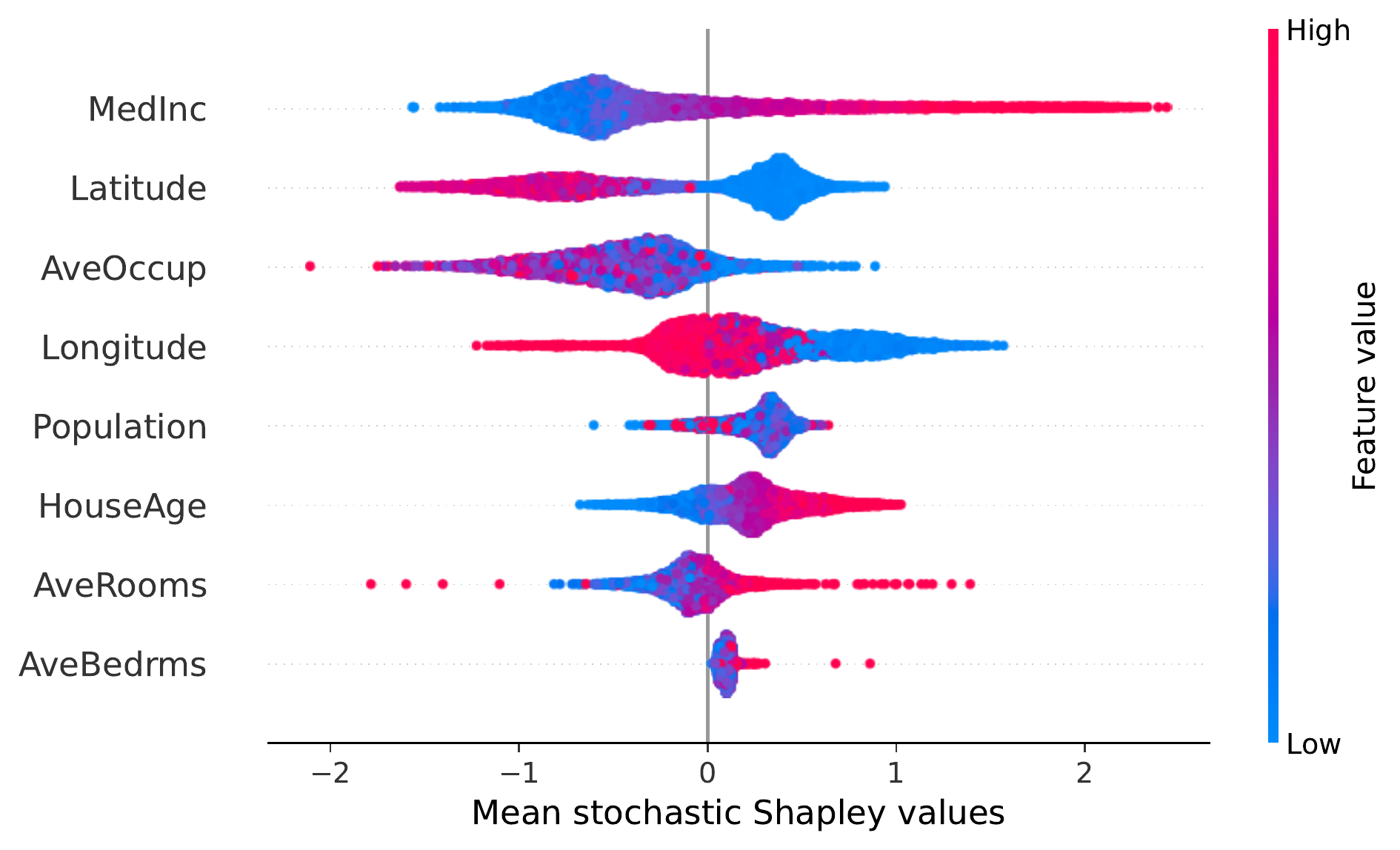}
         \caption{Mean of each stochastic explanations}
         \label{fig: housing_mean_sv}
     \end{subfigure}
     \quad\quad
      \begin{subfigure}[b]{0.47\textwidth}
         \centering
         \includegraphics[width=\textwidth]{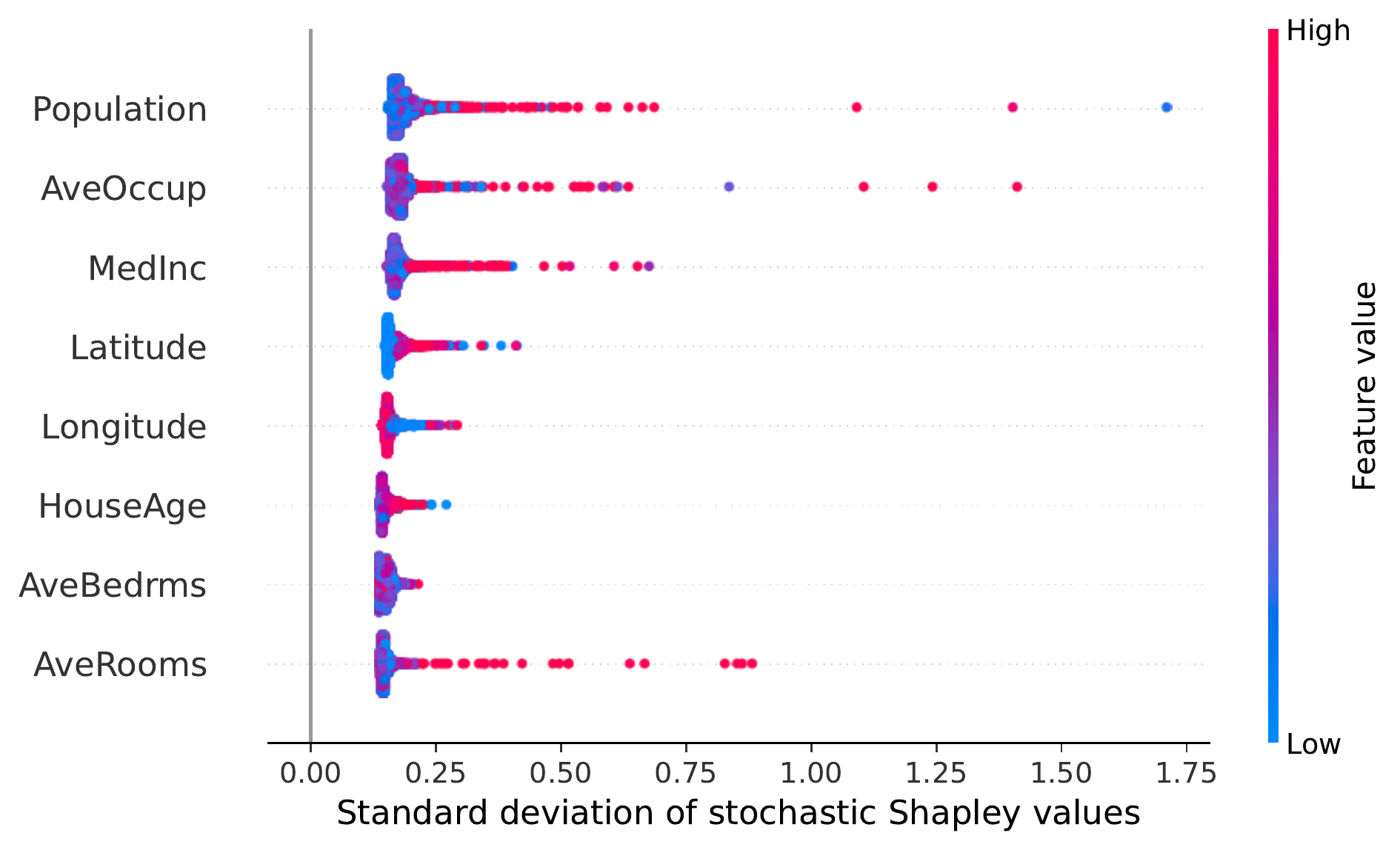}
         \caption{Standard deviation of each explanations}
         \label{fig: housing_std_sv}
     \end{subfigure}
    \caption{We plot the beeswarm plot of the mean and standard deviations of each stochastic explanations from BayesGP-SHAP fitted on the housing dataset. The features are ranked according to the distance span by the largest and smallest mean (std) stochastic Shapley values.}
    \label{fig: housing_granular}
\end{figure}

\subsection{Exploratory analysis of the stochastic explanations}


For this illustration, we utilise the breast cancer dataset~\citep{street1993nuclear}, containing $569$ patients with $30$ numeric features. They are computed from a digitized image of a fine needle aspirate (FNA) of a breast mass and describe characteristics of the cell nuclei present in the image:

\begin{itemize}
    \item radius (mean of distances from center to points on the perimeter) 
    \item texture (standard deviation of gray-scale values)
    \item perimeter
    \item area
    \item smoothness (local variation in radius lengths)
    \item compactness $(\frac{\text{perimeter}^2}{\text{area} - 1})$
    \item concavity (severity of concave portions of the contour)
    \item concave points (number of concave portions of the contour)
    \item symmetry
    \item fractal dimension (“coastline approximation” - 1)
\end{itemize}

The goal is to predict whether a tumour is malignant or benign. We first fit a GP model with RBF kernel using again the sparse Variational GP formulation with $200$ learnable inducing locations. We initialise the inducing points using standard clustering techniques on the data. The evidence lower bound objective is optimised with a learning rate of $1e^{-4}$ and $1000$ iterations using batch gradient descent of batch size $64$. To obtain the explanations, we run the BayesGP-SHAP algorithm with $2^{16}$ number of coalitions. We do not compare GP-SHAP and BayesSHAP here because the BayesSHAP uncertainties have shrunk to almost zero, i.e., the mean standard deviations from the BayesSHAP uncertainties across all features and data is $0.0002$. This reconfirms the fact from \citet{slack_reliable_2021} that as we increase the sample size the 
estimation error goes to zero, thus the uncertainties from BayesSHAP goes to zero as well. On the other hand, GP-SHAP uncertainties still remain valid because it represents the GP predictive uncertainties, which do not shrink to zero as we increase the number of coalitions we use to esitmate the SVs.


\paragraph{Further illustrations: } In Figure~\ref{fig: cancer_granular}, we plot two violin plots to illustrate the relationship between mean and standard deviation of the stochastic values with respect to the size of the original feature. We see that the feature ``worst fractal dimension'' are the second most influential feature in terms of mean stochastic explanations and also the feature that has highest uncertainty around its explanations. In comparison with the housing prediction problem illustrated in Figure~\ref{fig: housing_granular}, the higher the feature value doesn't necessary give higher uncertainty around its explanation.

\begin{figure}
    \centering
    \begin{subfigure}[b]{0.47\textwidth}
         \centering
         \includegraphics[width=\textwidth]{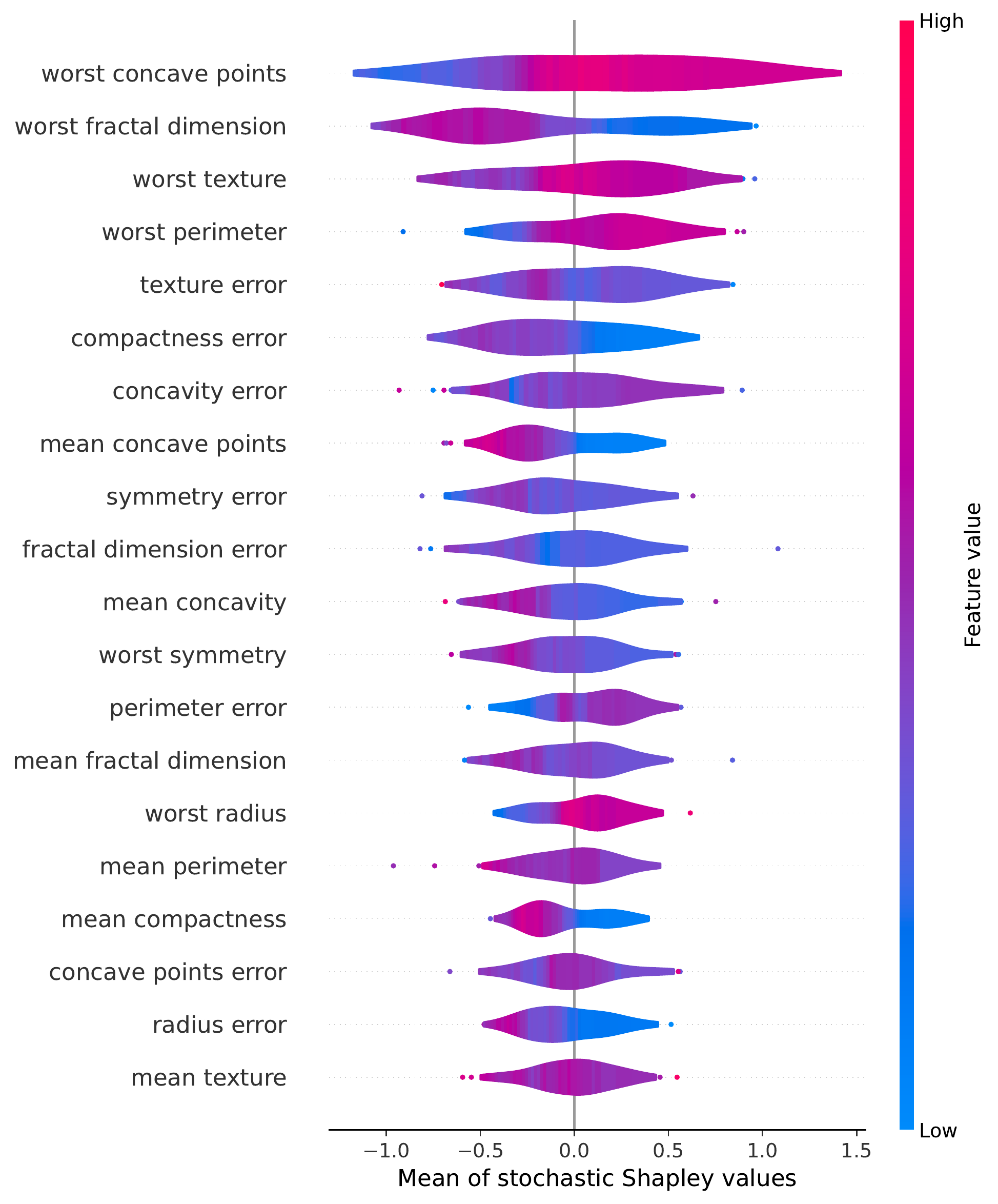}
         \caption{Mean of each stochastic explanations}
         \label{fig: cancer_mean_sv}
     \end{subfigure}
     \quad\quad
      \begin{subfigure}[b]{0.47\textwidth}
         \centering
         \includegraphics[width=\textwidth]{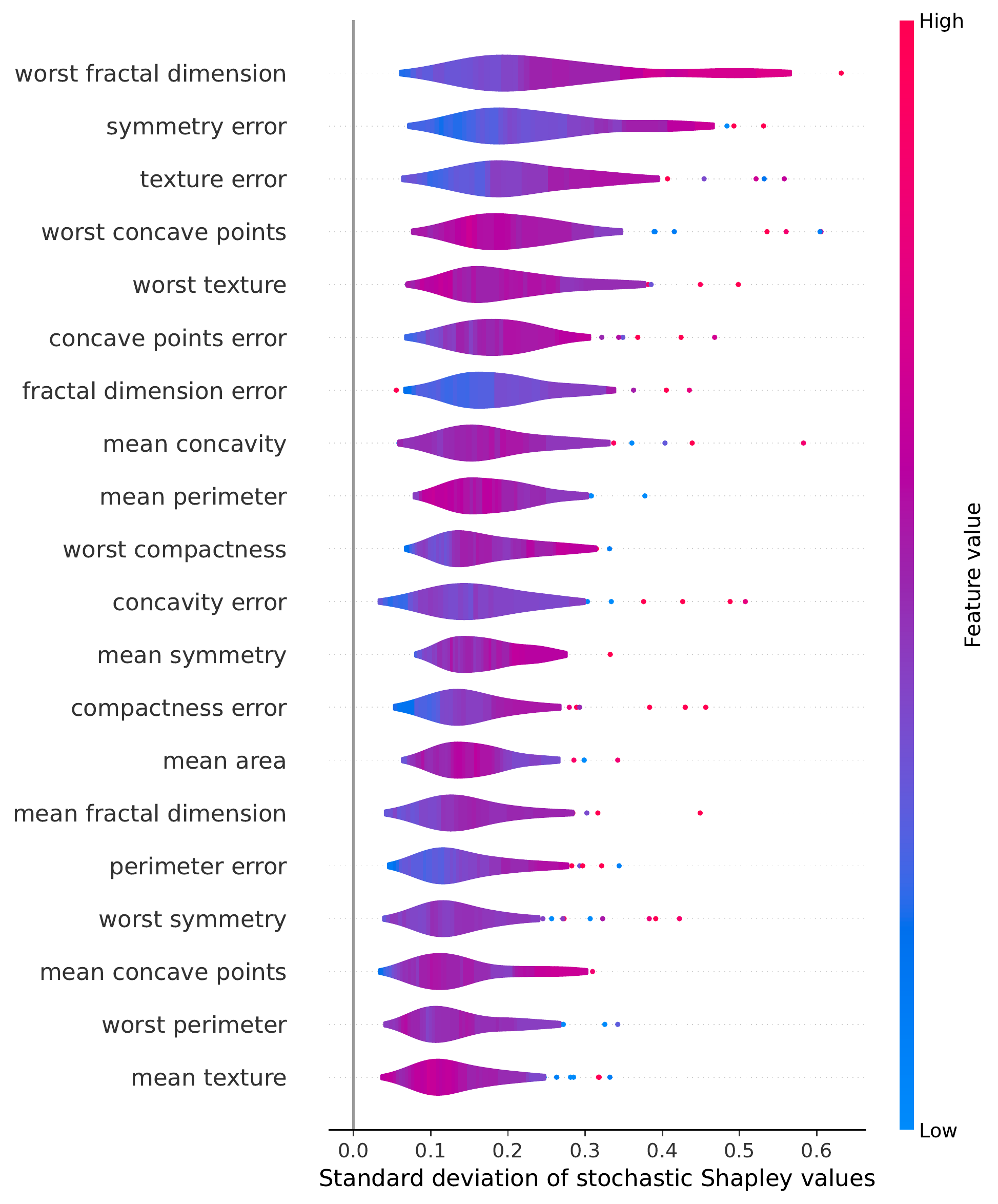}
         \caption{Standard deviation of each explanations}
         \label{fig: cancer_std_sv}
     \end{subfigure}
    \caption{We plot the violin plot of the mean and standard deviations of each stochastic explanations from BayesGP-SHAP fitted on the breast cancer. The features are ranked according to the distance span by the largest and smallest mean (std) stochastic Shapley values.}
    \label{fig: cancer_granular}
\end{figure}

\subsection{Predictive explanations} 

For this illustration, we utilise the Diabetes dataset~\citep{efron_least_2004} with $442$ patient data and $10$ numeric features measuring the following:
\begin{itemize}
    \item age: age in years
    \item sex
    \item bmi: body mass index
    \item bp: average blood presuure
    \item s1: total serum cholesterol
    \item s2: low-density lipoproteins
    \item s3: high-density lipoproteins
    \item s4: total cholesterol
    \item s5: Log of serum triglycerides level
    \item s6: blood sugar level
\end{itemize}
The experiment is to assess the effectiveness of the Shapley prior we proposed in predicting explanations estimated using SHAP algorithms for general models, including GP-SHAP, TreeSHAP, and DeepSHAP. We use the implementation of TreeSHAP and DeepSHAP from the \textbf{shap} package~\citep{lundberg2017unified}.

While algorithms such FastSHAP~\citep{covert_improving_2021} also learn a vector-valued function that returns explanations given instances, the algorithm require access to the underlying model $f$ during training while ours required previously computed explanations. Due to this importance difference in the problem setup, we do not compare the two algorithm.


We first generate three sets of explanations to set up three regression problems:
\begin{enumerate}
    \item Fit a Gaussian process model and then run GP-SHAP to obtain explanations.
    \item Fit a random forest model and then run TreeSHAP to obtain explanations.
    \item Fit a neural network model and then run DeepSHAP to obtain explanations.
\end{enumerate}

After obtaining explanations as groundtruths for this experiment, we randomly divide $70\%$ of them as training data and $30\%$ of them as testing data. We then do the following,
\begin{enumerate}
    \item We fit a multi-output GP using the proposed Shapley prior on the training data and predict the explanations of the unseen test data.
    \item We fit a multi-output random forest model on the training data and predict the explanations of the unseen test data.
    \item We fit a multi-output neural network model on the training data and predict the explanations of the unseen test data.
\end{enumerate}
We repeat this experiment $10$ times using different seeds and compute the RMSE between the predicted and groundtruths explanations. The results are then plotted in Figure~\ref{fig: predictive_explanation}.

\subsection{Further ablation study: Impact of increased posterior prediction uncertainty on explanation uncertainties}

In this ablation study, we aim to examine the effect of increasing the uncertainty in posterior predictions on the corresponding uncertainty in stochastic Shapley values. To demonstrate this, we utilize the diabetic dataset~\citep{efron_least_2004} and split the data based on recorded sex. We train our GP model on the male data and employ BayesGP-SHAP to explain the prediction results for both the male training data and the female testing data. We adopt this split because we expect the biological characteristics between males and females to be distinct enough to treat the female data as out-of-sample data, thereby naturally resulting in increased predictive uncertainty for the female data. To further amplify this uncertainty, we multiply each instance in the female testing data by distortion factors of two and three, respectively, and assess the corresponding uncertainties in the explanations.

We begin by illustrating the relationship between the out-of-sampleness of the data and the corresponding increase in predictive posterior uncertainties. This is depicted in Figure~\ref{fig: ablation2_std_prediction}, where we observe that as the data becomes more out-of-sample, the predictive uncertainties consistently rise. Even at distortion level $1$, which represents the original female data, we can already observe increased uncertainties compared to the uncertainties derived from male data prediction.

Furthermore, these increased uncertainties in the predictive posterior are reflected in the associated feature explanations. This is evident in Figure~\ref{fig: ablation2_std_explanation}, where we visualize the uncertainties associated with the feature explanations. For instance, the green bars representing the average uncertainties in explaining female data with no distortion are consistently larger than the red bars, which represent the average uncertainties of male data explanations. This observation aligns with the higher predictive uncertainties observed in Figure~\ref{fig: ablation2_std_prediction} for the female data compared to the male training data.

It is worth noting that the uncertainty for the feature ``sex'' remains consistently close to zero. This is because the feature ``sex'' is constant within both the female and male datasets. As a result, it acts as the null player in each dataset and obtains an almost Dirac zero as its stochastic Shapley value.

\begin{figure}
    \centering
    \begin{subfigure}[b]{0.47\textwidth}
         \centering
         \includegraphics[width=\textwidth]{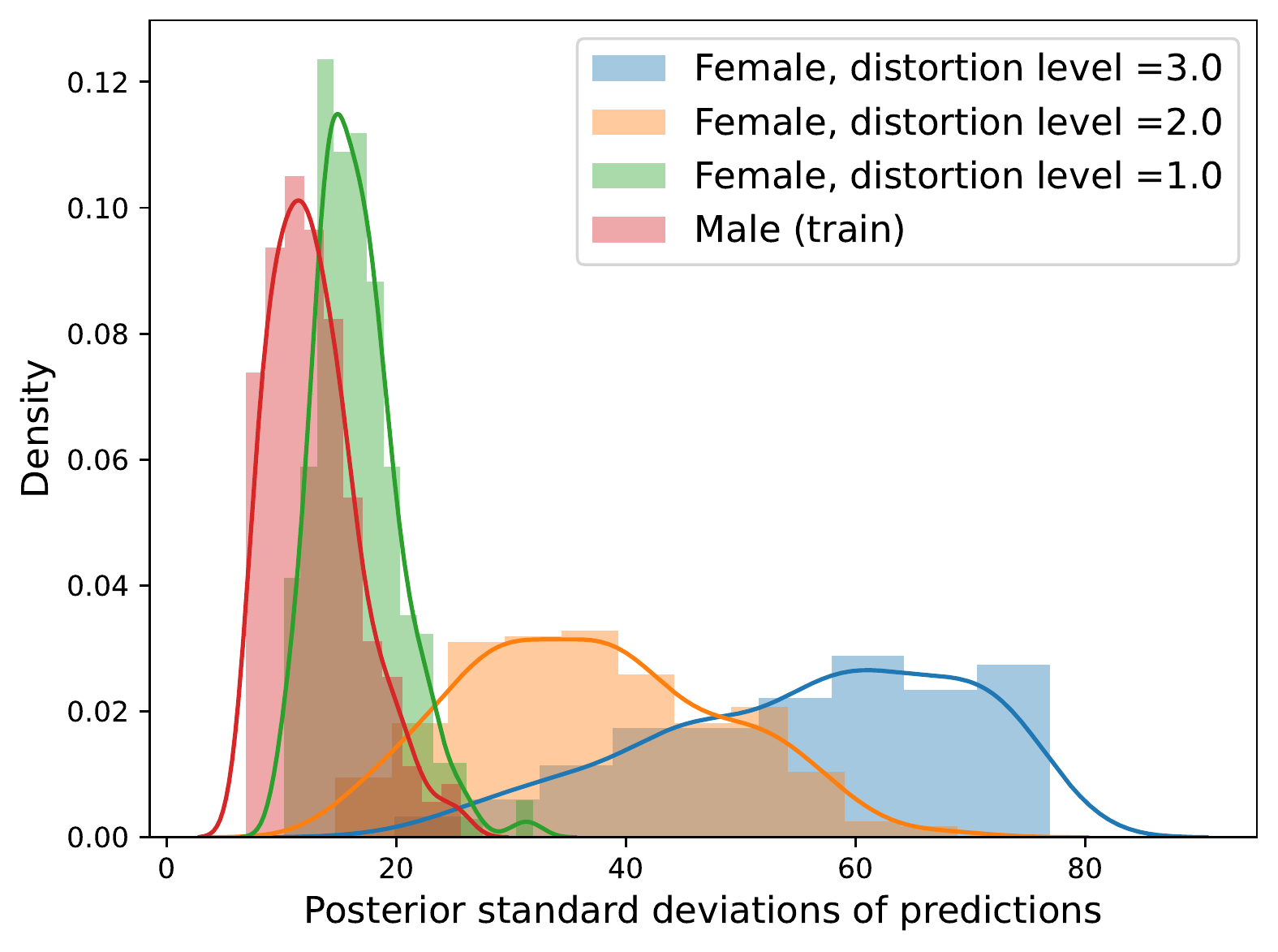}
         \caption{Predictive posterior standard deviation}
         \label{fig: ablation2_std_prediction}
     \end{subfigure}
     \quad\quad
      \begin{subfigure}[b]{0.47\textwidth}
         \centering
         \includegraphics[width=\textwidth]{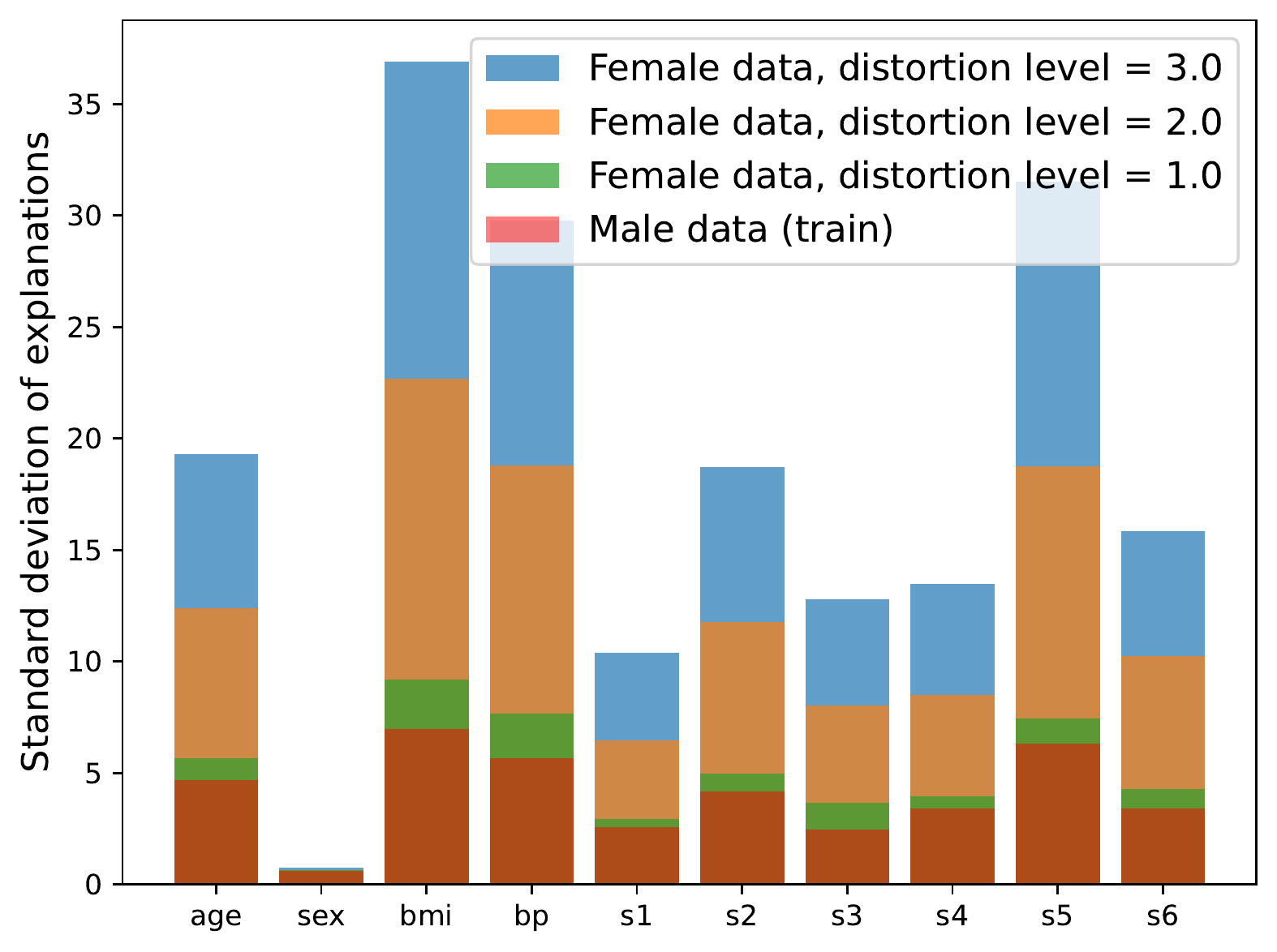}
         \caption{Mean of standard deviations of explanations}
         \label{fig: ablation2_std_explanation}
     \end{subfigure}
    \caption{Ablation study: (left) We begin by training a Gaussian Process (GP) model on the male data. We then make predictions using this trained model on both the male data and out-of-sample female data. To assess the impact of increasing posterior uncertainties, we multiply the female data by distortion levels of $1.0$, $2.0$, and $3.0$. We visualize the results by plotting the density plot of the standard deviations obtained from the predictive posterior distributions. (right) Next, we focus on analyzing the average standard deviations of explanations per feature from the male and female data, considering different distortion levels. We observe that as we progressively increase the posterior uncertainties in the sample, these uncertainties are reflected in the uncertainties of the explanations provided.}
    \label{fig: ablation_2}
\end{figure}

\end{document}